\let\classAND\AND
\let\AND\relax
\let\AND\classAND
\theoremstyle{plain}
\theoremstyle{definition}
\newtheorem{definition}{Definition}[section]
\newtheorem{assumption}{Assumption}[section]
\newtheorem{conjecture}{Conjecture}[section]
\newtheorem{researchquestion}{Question}[subsection]
\theoremstyle{remark}
\crefname{section}{\S}{\S}
\crefname{subsection}{\S}{\S}
\crefname{subsubsection}{\S}{\S}
\crefname{figure}{Fig.}{Figs.}
\crefname{prop}{Prop.}{Props.}
\crefname{proposition}{Prop.}{Props.}
\crefname{appendix}{Appx.}{Appxs.}
\crefname{algorithm}{Alg.}{Algs.}
\crefname{theorem}{Thm.}{Thms.}
\crefname{conjecture}{Conj.}{Conjs.}
\crefname{researchquestion}{Q.}{Qs.}
\crefname{definition}{Defn.}{Defns.}
\crefname{cor}{Cor.}{Cors.}
\crefname{lem}{Lem.}{Lems.}
\crefname{table}{Tab.}{Tabs.}
\crefname{assum}{Assum.}{Assums.}
\crefname{example}{Ex.}{Exs.}
\newcommand{\ie}{i.e.\@\xspace}
\newcommand{\eg}{e.g.\@\xspace}
\newcommand{\cf}{cf.\@\xspace} %
\newcommand{\wrt}{w.r.t.\@\xspace} %
\newcommand{\inv}[1]{\ensuremath{#1^{-1}}}
\newcommand{\mat}[1]{\ensuremath{\boldsymbol{\mathrm{#1}}}}
\newcommand{\abs}[1]{\ensuremath{\left|#1\right|}}
\newcommand{\marginal}[1]{\ensuremath{p\parenthesis{#1}}}
\newcommand{\expnum}[2]{\ensuremath{{#1}\mathrm{e}{#2}}}
\newcommand{\expectation}[1]{\ensuremath{\mathbb{E}_{#1}}}
\newcommand{\rr}[1]{\ensuremath{\mathbb{R}^{#1}}}
\newcommand{\parenthesis}[1]{\ensuremath{\left(#1\right)}}
\newcommand{\brackets}[1]{\ensuremath{\left[#1\right]}}
\newcommand{\braces}[1]{\ensuremath{\left\{#1\right\}}}
\newcommand{\kl}[2]{\ensuremath{\text{KL}\brackets{#1|| #2}}}
\newcommand{\patrik}[1]{\textcolor{cyan}{[\textbf{Patrik:} #1]}}
\DeclareMathOperator*{\argmax}{argmax}
\newcommand{\epsnonident}{$\varepsilon-$non-identifiability\xspace}
\newcommand{\pdfset}{\ensuremath{\mathcal{M}}\xspace}
\newcommand{\domain}{\ensuremath{\mathcal{X}}\xspace}
\newcommand{\property}{\ensuremath{\mathcal{A}}\xspace}
\newcommand{\pdfclass}{\ensuremath{\mathcal{P}}\xspace}
\newcommand{\funcclass}{\ensuremath{\mathcal{F}}\xspace}
\newcommand{\biasclass}{\ensuremath{\mathcal{B}}\xspace}
\newglossaryentry{aux}{
    name        = \ensuremath{\mathrm{\boldsymbol{u}}} ,
    description = {auxiliary variable} ,
    type        = abbrev,
}
\newglossaryentry{im}{
    name        = \ensuremath{\mathrm{Im}} ,
    description = {image space} ,
    type        = abbrev,
}
\newglossaryentry{ker}{
    name        = \ensuremath{\mathrm{Ker}} ,
    description = {kernel space} ,
    type        = abbrev,
}
\newglossaryentry{kronecker}{
    name        = \ensuremath{\otimes} ,
    description = {Kronecker product} ,
    type        = abbrev,
}
\newglossaryentry{loss}{
    name        = \ensuremath{\mathcal{L}} ,
    description = {loss function} ,
    type        = abbrev,
}
\newglossaryentry{numenv}{
    name        = \ensuremath{\abs{E}} ,
    description = {number of environments} ,
    type        = abbrev,
}
\newglossaryentry{lr}{
    name        = \ensuremath{\eta} ,
    description = {learning rate} ,
    type        = abbrev,
}
\newglossaryentry{hypersphere}{
    name        = \ensuremath{\mathcal{S}} ,
    description = {hypersphere} ,
    type        = abbrev,
}
\newglossaryentry{dec}{
    name        = \ensuremath{\boldsymbol{f}} ,
    description = {decoder map $\gls{Latent}\to\gls{Obs}$} ,
    type        = abbrev,
}
\newglossaryentry{deccomp}{
    name        = \ensuremath{f} ,
    description = {decoder map component} ,
    type        = abbrev,
}
\newglossaryentry{enc}{
    name        = \ensuremath{\boldsymbol{g}} ,
    description = {encoder map $\gls{Obs}\to\gls{Latent}$} ,
    type        = abbrev,
}
\newglossaryentry{numdata}{
    name        = \ensuremath{n} ,
    description = {number of samples} ,
    type        = abbrev,
}
\newglossaryentry{observations}{type=abbrev,name=Observations,description={\nopostdesc}}
\newglossaryentry{obs}{
    name        = \ensuremath{\boldsymbol{x}} ,
    description = {observation vector} ,
    type        = abbrev,
    parent      = observations,
}
\newglossaryentry{obscomp}{
    name        = \ensuremath{x} ,
    description = {observation single component} ,
    type        = abbrev,
    parent      = observations,
}
\newglossaryentry{Obs}{
    name        = \ensuremath{\mathcal{X}} ,
    description = {observation space} ,
    type        = abbrev,
    parent      = observations,
}
\newglossaryentry{obsdim}{
    name        = \ensuremath{D} ,
    description = {dimensionality of the observation space \gls{Obs}} ,
    type        = abbrev,
    parent      = observations,
}
\newglossaryentry{obsmat}{
    name        = \ensuremath{\mat{X}} ,
    description = {observation matrix of \rr{\gls{numdata}\times\gls{obsdim}}} ,
    type        = abbrev,
    parent      = observations,
}
\newglossaryentry{obspos}{
    name        = \ensuremath{\tilde{\boldsymbol{x}}} ,
    description = {positive observation vector} ,
    type        = abbrev,
    parent      = observations,
}
\newglossaryentry{obsneg}{
    name        = \ensuremath{{\boldsymbol{x}}^{-}} ,
    description = {negative observation vector} ,
    type        = abbrev,
    parent      = observations,
}
\newglossaryentry{labels}{type=abbrev,name=Labels,description={\nopostdesc}}
\newglossaryentry{label}{
    name        = \ensuremath{\boldsymbol{y}} ,
    description = {label vector} ,
    type        = abbrev,
    parent      = labels,
}
\newglossaryentry{labelhat}{
    name        = \ensuremath{\widehat{\boldsymbol{y}}} ,
    description = {estimated label vector} ,
    type        = abbrev,
    parent      = labels,
}
\newglossaryentry{labelcomp}{
    name        = \ensuremath{y} ,
    description = {label component} ,
    type        = abbrev,
    parent      = labels,
}
\newglossaryentry{labelcomphat}{
    name        = \ensuremath{\widehat{y}} ,
    description = {label component} ,
    type        = abbrev,
    parent      = labels,
}
\newglossaryentry{labelset}{
    name        = \ensuremath{\mathcal{Y}} ,
    description = {label set} ,
    type        = abbrev,
    parent      = labels,
}
\newglossaryentry{labeldim}{
    name        = \ensuremath{C} ,
    description = {number of classes in the label set \gls{labelset}} ,
    type        = abbrev,
    parent      = labels,
}
\newglossaryentry{latents}{type=abbrev,name=Latents,description={\nopostdesc}}
\newglossaryentry{latent}{
    name        = \ensuremath{\boldsymbol{z}} ,
    description = {latent vector} ,
    type        = abbrev,
    parent     = latents,
}
\newglossaryentry{latentcomp}{
    name        = \ensuremath{z} ,
    description = {latent single component} ,
    type        = abbrev,
    parent     = latents,
}
\newglossaryentry{Latent}{
    name        = \ensuremath{\mathcal{Z}} ,
    description = {latents} ,
    type        = abbrev,
    parent     = latents,
}
\newglossaryentry{latentdim}{
    name        = \ensuremath{d} ,
    description = {dimensionality of the latent space \gls{Latent}} ,
    type        = abbrev,
    parent     = latents,
}
\newglossaryentry{latentmat}{
    name        = \ensuremath{\mat{Z}} ,
    description = {latent matrix of \rr{\gls{numdata}\times\gls{latentdim}}} ,
    type        = abbrev,
    parent      = latents,
}
\newglossaryentry{latentpos}{
    name        = \ensuremath{\tilde{\boldsymbol{z}}} ,
    description = {positive latent vector} ,
    type        = abbrev,
    parent      = latents,
}
\newglossaryentry{latentneg}{
    name        = \ensuremath{\boldsymbol{z}^{-}} ,
    description = {negative latent vector} ,
    type        = abbrev,
    parent      = observations,
}
\newglossaryentry{sigmaz}{
    name        = \ensuremath{\boldsymbol{\sigma}_{\gls{latentcomp}}} ,
    description = {std of \gls{latentcomp}} ,
    type        = abbrev,
    parent     = latents,
}
\newglossaryentry{content}{
    name        = \ensuremath{\boldsymbol{z}^{c}} ,
    description = {content latent vector} ,
    type        = abbrev,
    parent     = latents,
}
\newglossaryentry{contentcomp}{
    name        = \ensuremath{z^{c}} ,
    description = {content latent single component} ,
    type        = abbrev,
    parent     = latents,
}
\newglossaryentry{Content}{
    name        = \ensuremath{\mathcal{Z}^{c}} ,
    description = {content} ,
    type        = abbrev,
    parent     = latents,
}
\newglossaryentry{contentdim}{
    name        = \ensuremath{d_{c}} ,
    description = {dimensionality of \gls{content}} ,
    type        = abbrev,
    parent     = latents,
}
\newglossaryentry{sigmac}{
    name        = \ensuremath{\boldsymbol{\sigma}_{c}} ,
    description = {std of \gls{contentcomp}} ,
    type        = abbrev,
    parent     = latents,
}
\newglossaryentry{style}{
    name        = \ensuremath{\boldsymbol{z}^{s}} ,
    description = {style latent vector} ,
    type        = abbrev,
    parent     = latents,
}
\newglossaryentry{stylecomp}{
    name        = \ensuremath{z^{s}} ,
    description = {style latent single component} ,
    type        = abbrev,
    parent     = latents,
}
\newglossaryentry{Style}{
    name        = \ensuremath{\mathcal{Z}^{s}} ,
    description = {style} ,
    type        = abbrev,
    parent     = latents,
}
\newglossaryentry{styledim}{
    name        = \ensuremath{d_{s}} ,
    description = {dimensionality of \gls{style}} ,
    type        = abbrev,
    parent     = latents,
}
\newglossaryentry{sigmas}{
    name        = \ensuremath{\boldsymbol{\sigma}_{s}} ,
    description = {std of \gls{stylecomp}} ,
    type        = abbrev,
    parent     = latents,
}
\newglossaryentry{modality}{
    name        = \ensuremath{\boldsymbol{z}^{m}} ,
    description = {modality-specific latent vector} ,
    type        = abbrev,
    parent     = latents,
}
\newglossaryentry{modalitycomp}{
    name        = \ensuremath{z^{m}} ,
    description = {modality-specific  latent single component} ,
    type        = abbrev,
    parent     = latents,
}
\newglossaryentry{Modality}{
    name        = \ensuremath{\mathcal{Z}^{m}} ,
    description = {latent subspace of \gls{modality}} ,
    type        = abbrev,
    parent     = latents,
}
\newglossaryentry{modalitydim}{
    name        = \ensuremath{d_{m}} ,
    description = {dimensionality of \gls{modality}} ,
    type        = abbrev,
    parent     = latents,
}
\newglossaryentry{algebra}{type=abbrev,name=Algebra,description={\nopostdesc}}
\newglossaryentry{identity}{
    name        = \ensuremath{\boldsymbol{\mathrm{I}}} ,
    description = { identity matrix} ,
    type        = abbrev,
    parent      = algebra,
}
\newglossaryentry{ones}{
    name        = \ensuremath{\boldsymbol{\mathrm{1}}} ,
    description = {a vector of ones} ,
    type        = abbrev,
    parent      = algebra,
}
\newglossaryentry{zeros}{
    name        = \ensuremath{\boldsymbol{\mathrm{0}}} ,
    description = {a vector of zeros} ,
    type        = abbrev,
    parent      = algebra,
}
\newglossaryentry{jacobian}{
    name        = \ensuremath{\boldsymbol{\mathrm{J}}} ,
    description = {Jacobian matrix} ,
    type        = abbrev,
    parent      = algebra,
}
\newglossaryentry{hessian}{
    name        = \ensuremath{\boldsymbol{\mathrm{H}}} ,
    description = {Hessian matrix} ,
    type        = abbrev,
    parent      = algebra,
}
\newglossaryentry{d}{
    name        = \ensuremath{\boldsymbol{\mathrm{D}}} ,
    description = {diagonal matrix} ,
    type        = abbrev,
    parent      = algebra,
}
\newglossaryentry{o}{
    name        = \ensuremath{\boldsymbol{\mathrm{O}}},
    description = {orthogonal matrix} ,
    type        = abbrev,
    parent      = algebra,
}
\newglossaryentry{scalar}{
    name        = \ensuremath{\alpha} ,
    description = {scalar field} ,
    type        = abbrev,
    parent      = algebra,
}
\newglossaryentry{perm}{
    name        = \ensuremath{\mathbb{P}} ,
    description = {group of permutation matrices} ,
    type        = abbrev,
    parent      = algebra,
}
\newglossaryentry{p}{
    name        = \ensuremath{\mat{P}},
    description = {permutation matrix} ,
    type        = abbrev,
    parent      = algebra,
}
\newglossaryentry{prob}{type=abbrev,name=Probability theory,description={\nopostdesc}}
\newglossaryentry{cov}{
    name        = \ensuremath{\boldsymbol{\mathrm{\Sigma}}},
    description = {covariance matrix} ,
    type        = abbrev,
    parent      = prob,
}
\newglossaryentry{mean}{
    name        = \ensuremath{\boldsymbol{\mu}},
    description = {mean} ,
    type        = abbrev,
    parent      = prob,
}
\newglossaryentry{std}{
    name        = \ensuremath{\boldsymbol{\sigma}},
    description = {standard deviation} ,
    type        = abbrev,
    parent      = prob,
}
\newglossaryentry{entropy}{
    name        = \ensuremath{\mathrm{H}} ,
    description = {entropy} ,
    type        = abbrev,
    parent      = prob,
}
\newglossaryentry{expfamparam}{
    name        = \ensuremath{\boldsymbol{\theta}} ,
    description = {parameter of exponential family} ,
    type        = abbrev,
    parent      = prob,
}
\newglossaryentry{expfamnatparam}{
    name        = \ensuremath{\boldsymbol{\eta}} ,
    description = {natural parameter of exponential family} ,
    type        = abbrev,
    parent      = prob,
}
\newglossaryentry{expfamsuffstat}{
    name        = \ensuremath{T(\gls{obs})} ,
    description = {sufficient statistics of exponential family} ,
    type        = abbrev,
    parent      = prob,
}
\newglossaryentry{expfamlogpartition}{
    name        = \ensuremath{A} ,
    description = {log parition function of exponential family (depends on \gls{expfamnatparam})} ,
    type        = abbrev,
    parent      = prob,
}
\newglossaryentry{wishart}{
    name        = \ensuremath{\mathcal{W}} ,
    description = {Wishart distribution} ,
    type        = abbrev,
    parent      = prob,
}
\newglossaryentry{normal}{
    name        = \ensuremath{\mathcal{N}} ,
    description = {normal distribution} ,
    type        = abbrev,
    parent      = prob,
}
\newglossaryentry{matrixnormal}{
    name        = \ensuremath{\mathcal{MN}} ,
    description = {normal distribution} ,
    type        = abbrev,
    parent      = prob,
}
\newglossaryentry{causal}{type=abbrev,name=Causality,description={\nopostdesc}}
\newglossaryentry{cause}{
    name        = \ensuremath{\boldsymbol{N}},
    description = {noise (independent)  variable vector} ,
    type        = abbrev,
    parent      = causal,
}
\newglossaryentry{causecomp}{
    name        = \ensuremath{N},
    description = {noise (independent)  variable component} ,
    type        = abbrev,
    parent      = causal,
}
\newglossaryentry{Cause}{
    name        = \ensuremath{\mathcal{N}} ,
    description = {space of the noise variables} ,
    type        = abbrev,
    parent      = causal,
}
\newglossaryentry{effect}{
    name        = \ensuremath{\boldsymbol{X}},
    description = {observation vector} ,
    type        = abbrev,
    parent      = causal,
}
\newglossaryentry{effectcomp}{
    name        = \ensuremath{X},
    description = {observation component} ,
    type        = abbrev,
    parent      = causal,
}
\newglossaryentry{Effect}{
    name        = \ensuremath{\mathcal{X}} ,
    description = {space of the effect variables} ,
    type        = abbrev,
    parent      = causal,
}
\newglossaryentry{pa}{
    name        = \ensuremath{\boldsymbol{Pa}},
    description = {parents of \gls{effect}} ,
    type        = abbrev,
    parent      = causal,
}
\newglossaryentry{nondesc}{
    name        = \ensuremath{\boldsymbol{ND}},
    description = {non-descendants of \gls{effect}} ,
    type        = abbrev,
    parent      = causal,
}
\newglossaryentry{nondescminuspa}{
    name        = \ensuremath{\boldsymbol{\overline{ND}}},
    description = {non-descendants of \gls{effect}, excluding its parents} ,
    type        = abbrev,
    parent      = causal,
}
\newglossaryentry{semf}{
    name        = \ensuremath{\boldsymbol{f}},
    description = {structural assignment in \glspl{sem}} ,
    type        = abbrev,
    parent      = causal,
}
\newglossaryentry{semfcomp}{
    name        = \ensuremath{f},
    description = {a component of \gls{semf}} ,
    type        = abbrev,
    parent      = causal,
}
\newglossaryentry{order}{
    name        = \ensuremath{\pi},
    description = {causal ordering} ,
    type        = abbrev,
    parent      = causal,
}
\newglossaryentry{indexset}{
    name        = \ensuremath{\mathcal{I}},
    description = {index set} ,
    type        = abbrev,
    parent      = causal,
}
\newglossaryentry{adjacency}{
    name        = \ensuremath{\boldsymbol{\mathcal{A}}} ,
    description = {adjacency matrix of a \glspl{sem}} ,
    type        = abbrev,
    parent      = causal,
}
\newglossaryentry{connectivity}{
    name        = \ensuremath{\boldsymbol{\mathcal{C}}} ,
    description = {connectivity matrix of a \glspl{sem}} ,
    type        = abbrev,
    parent      = causal,
}
\newglossaryentry{dependency}{
    name        = \ensuremath{\mathcal{D}} ,
    description = {dependency matrix of a \glspl{sem}} ,
    type        = abbrev,
    parent      = causal,
}
\newglossaryentry{seq}{
    name        = \ensuremath{\sim_{\acrshort{dag}}} ,
    description = {structural equivalence} ,
    type        = abbrev,
    parent      = causal,
}
\newglossaryentry{contrastive}{type=abbrev,name=Contrastive Learning,description={\nopostdesc}}
\newglossaryentry{clloss}{
    name        = \ensuremath{\mathcal{L}_{\mathrm{\acrshort{cl}}}} ,
    description = {contrastive loss function} ,
    type        = abbrev,
    parent      = contrastive,
}
\newglossaryentry{alignloss}{
    name        = \ensuremath{\mathcal{L}_{\mathrm{align}}} ,
    description = {alignment term in \gls{clloss}} ,
    type        = abbrev,
    parent      = contrastive,
}
\newglossaryentry{uniformloss}{
    name        = \ensuremath{\mathcal{L}_{\mathrm{uniform}}} ,
    description = {uniformity term in \gls{clloss}} ,
    type        = abbrev,
    parent      = contrastive,
}
\newglossaryentry{temp}{
    name        = \ensuremath{{\boldsymbol{\tau}}} ,
    description = {temperature in \gls{clloss}} ,
    type        = abbrev,
    parent      = contrastive,
}
\newglossaryentry{numneg}{
    name        = \ensuremath{M} ,
    description = {number of negative samples} ,
    type        = abbrev,
    parent      = contrastive,
}
\newglossaryentry{vaes}{type=abbrev,name=\acrlongpl{vae},description={\nopostdesc}}
\newglossaryentry{q}{
    name        = \ensuremath{q_{\gls{encpar}}(\gls{latent}|\gls{obs})} ,
    description = {variational posterior of the \acrshort{vae}, mapping $\gls{obs}\mapsto\gls{latent}$ parametrized by \gls{encpar}} ,
    type        = abbrev,
    parent      = vaes,
}
\newglossaryentry{qopt}{
    name        = \ensuremath{q_{\widehat{\gls{encpar}}}(\gls{latent}|\gls{obs})} ,
    description = {optimal variational posterior of the \acrshort{vae}, mapping $\gls{obs}\mapsto\gls{latent}$ parametrized by \gls{encpar}} ,
    type        = abbrev,
    parent      = vaes,
}
\newglossaryentry{encpar}{
    name        = \ensuremath{\boldsymbol{\phi}} ,
    description = {parameters of the variational posterior \gls{q}} ,
    type        = abbrev,
    parent      = vaes,
}
\newglossaryentry{encparopt}{
    name        = \ensuremath{\widehat{\boldsymbol{\phi}}} ,
    description = {optimal parameters of the variational posterior \gls{q}} ,
    type        = abbrev,
    parent      = vaes,
}
\newglossaryentry{var_family}{
    name        = \ensuremath{\mathcal{Q}} ,
    description = {distribution family of the variational posterior \gls{q} } ,
    type        = abbrev,
    parent      = vaes,
}
\newglossaryentry{pz}{
    name        = \ensuremath{p_0(\gls{latent})} ,
    description = {latent prior distribution} ,
    type        = abbrev,
    parent      = vaes,
}
\newglossaryentry{px}{
    name        = \ensuremath{p_{\gls{decpar}}(\gls{obs})} ,
    description = {marginal likelihood } ,
    type        = abbrev,
    parent      = vaes,
}
\newglossaryentry{pdata}{
    name        = \ensuremath{p(\gls{obs})} ,
    description = {data distribution } ,
    type        = abbrev,
    parent      = vaes,
}
\newglossaryentry{mean_enc}{
    name        = \ensuremath{\mu_{\gls{latent}|\gls{obs}}} ,
    description = {mean encoder of the \acrshort{vae}, \ie, $\expectation{\gls{latent}\sim\gls{q}}\parenthesis{\gls{latent}}$, mapping $\gls{obs}\mapsto\gls{latent}$} ,
    type        = abbrev,
    parent      = vaes,
}
\newglossaryentry{var_cov}{
    name        = \ensuremath{\gls{cov}^{\gls{encpar}}_{\gls{latent}|\gls{obs}}} ,
    description = {covariance matrix of \gls{q}} ,
    type        = abbrev,
    parent      = vaes,
}
\newglossaryentry{sigmak}{
    name        = \ensuremath{{\sigma}_{k}^{\gls{encpar}}(\gls{obs})^{2}} ,
    description = {variance of \gls{q} in dimension $k$} ,
    type        = abbrev,
    parent      = vaes,
}
\newglossaryentry{sigmaopt}{
    name        = \ensuremath{\boldsymbol{\sigma}^{\gls{encparopt}}(\gls{obs})^{2}} ,
    description = {optimal variance of \gls{q}} ,
    type        = abbrev,
    parent      = vaes,
}
\newglossaryentry{sigmaoptk}{
    name        = \ensuremath{{\sigma}_{k}^{\gls{encparopt}}(\gls{obs})^{2}} ,
    description = {optimal variance of \gls{q} in dimension $k$} ,
    type        = abbrev,
    parent      = vaes,
}
\newglossaryentry{mu}{
    name        = \ensuremath{\boldsymbol{\mu}^{\gls{encpar}}(\gls{obs})} ,
    description = {mean of \gls{q}} ,
    type        = abbrev,
    parent      = vaes,
}
\newglossaryentry{muk}{
    name        = \ensuremath{{\mu}_{k}^{\gls{encpar}}(\gls{obs})} ,
    description = {mean of \gls{q} in dimension $k$} ,
    type        = abbrev,
    parent      = vaes,
}
\newglossaryentry{muopt}{
    name        = \ensuremath{\boldsymbol{\mu}^{\gls{encparopt}}(\gls{obs})} ,
    description = {optimal mean of \gls{q}} ,
    type        = abbrev,
    parent      = vaes,
}
\newglossaryentry{muoptk}{
    name        = \ensuremath{{\mu}_{k}^{\gls{encparopt}}(\gls{obs})} ,
    description = {optimal mean of \gls{q} in dimension $k$} ,
    type        = abbrev,
    parent      = vaes,
}
\newglossaryentry{gamma}{
    name        = \ensuremath{\gamma} ,
    description = {square root of the precision of the \gls{vae} decoder} ,
    type        = abbrev,
    parent      = vaes,
}
\newglossaryentry{betaloss}{
    name        = \ensuremath{\mathcal{L}_{\beta}} ,
    description = {\betavae loss function} ,
    type        = abbrev,
    parent      = vaes,
}
\newglossaryentry{pxz}{
    name        = \ensuremath{p_{\gls{decpar}}(\gls{obs}|\gls{latent})} ,
    description = {conditional distribution of the decoded samples of the \acrshort{vae}, mapping $\gls{latent}\mapsto\gls{obs}$, parametrized by \gls{decpar}} ,
    type        = abbrev,
    parent      = vaes,
}
\newglossaryentry{pzx}{
    name        = \ensuremath{p_{\gls{decpar}}(\gls{latent}|\gls{obs})} ,
    description = {true posterior distribution of the decoded samples of the \acrshort{vae}, mapping $\gls{obs}\mapsto\gls{latent}$, parametrized by \gls{decpar}} ,
    type        = abbrev,
    parent      = vaes,
}
\newglossaryentry{decpar}{
    name        = \ensuremath{\boldsymbol{\theta}} ,
    description = {parameters of the decoder \gls{pxz}} ,
    type        = abbrev,
    parent      = vaes,
}
\newglossaryentry{invdeccomp}{
    name        = \ensuremath{{g}^{\gls{decpar}}} ,
    description = {inverse decoder component} ,
    type        = abbrev,
    parent      = vaes,
}
\newglossaryentry{invdec}{
    name        = \ensuremath{\mathrm{\boldsymbol{g}}^{\gls{decpar}}} ,
    description = {inverse decoder} ,
    type        = abbrev,
    parent      = vaes,
}
\newglossaryentry{distortion}{
    name        = \ensuremath{D} ,
    description = {Distortion of \cite{alemi_fixing_2018}, the same as the reconstruction term of the \acrshort{elbo} for $\beta=1$} ,
    type        = abbrev,
    parent      = vaes,
}
\newglossaryentry{rate}{
    name        = \ensuremath{R} ,
    description = {Rate of \cite{alemi_fixing_2018}, the same as the \acrshort{kld} term of the \acrshort{elbo} for $\beta=1$} ,
    type        = abbrev,
    parent      = vaes,
}
\newglossaryentry{lindec}{
    name        = \ensuremath{\boldsymbol{\mathrm{W}}} ,
    description = {weight matrix of a linear decoder} ,
    type        = abbrev,
    parent      = vaes,
}
\newglossaryentry{linenc}{
    name        = \ensuremath{\boldsymbol{\mathrm{V}}} ,
    description = {weight matrix of a linear encoder} ,
    type        = abbrev,
    parent      = vaes,
}
\newglossaryentry{imas}{type=abbrev,name=\acrlong{ima},description={\nopostdesc}}
\newglossaryentry{mixing}{
    name        = \ensuremath{\inv{g}} ,
    description = {inverse of the learned unmixing of the \acrshort{ima}, mapping $\gls{latent}\mapsto\gls{obs}$ } ,
    type        = abbrev,
    parent      = imas,
}
\newglossaryentry{lin_mixing}{
    name        = \ensuremath{A} ,
    description = {ground-truth \emph{linear} mixing process of the \acrshort{ima}, mapping $\gls{latent}\mapsto\gls{obs}$ } ,
    type        = abbrev,
    parent      = imas,
}
\newglossaryentry{cima_local}{
    name        = \ensuremath{c_{\acrshort{ima}}} ,
    description = {local \acrshort{ima} contrast } ,
    type        = abbrev,
    parent      = imas,
}
\newglossaryentry{cima_global}{
    name        = \ensuremath{C_{\acrshort{ima}}} ,
    description = {global \acrshort{ima} contrast } ,
    type        = abbrev,
    parent      = imas,
}
\newglossaryentry{source}{
    name        = \ensuremath{s} ,
    description = {sources (\acrshort{ica} equivalent of latents)} ,
    type        = abbrev,
    parent      = imas,
}
\newglossaryentry{rec_s}{
    name        = \ensuremath{\boldsymbol{y}} ,
    description = {reconstructed sources} ,
    type        = abbrev,
    parent      = imas,
}
\newglossaryentry{p_source}{
    name        = \ensuremath{p_{\gls{latent}}} ,
    description = {source distribution} ,
    type        = abbrev,
    parent      = imas,
}
\newglossaryentry{imaloss}{
    name        = \ensuremath{\mathcal{L}_{\gls{ima}}} ,
    description = {\gls{ima} loss function} ,
    type        = abbrev,
    parent      = imas,
}
\NewDocumentCommand{\cima}{ O{\gls{dec}} O{\gls{latent}}  }{\ensuremath{\gls{cima_local} ( #1\!,  #2) }\xspace}
\NewDocumentCommand{\Cima}{ O{\gls{dec}} O{\ensuremath{p_0}  }}{\ensuremath{\gls{cima_global} ( #1,  #2) }\xspace}
\newglossaryentry{gps}{type=abbrev,name=\acrlongpl{gp},description={\nopostdesc}}
\newglossaryentry{gpr}{
    name        = \ensuremath{\mathcal{GP}} ,
    description = {Gaussian Process} ,
    type        = abbrev,
    parent      = gps,
}
\newglossaryentry{gpker}{
    name        = \ensuremath{k} ,
    description = {kernel function} ,
    type        = abbrev,
    parent      = gps,
}
\newglossaryentry{gpcov}{
    name        = \ensuremath{\mathcal{K}} ,
    description = {$\gls{numdata}\times\gls{numdata}$ covariance matrix of a \acrshort{gp}} ,
    type        = abbrev,
    parent      = gps,
}
\newacronym{mpa}{MPA}{Measure Preserving Automorphism}
\newacronym{iid}{i.i.d.}{independent and identically distributed}
\newacronym{vmf}{vMF}{von Mises-Fisher}
\newacronym{nivmf}{nivMF}{non-isotropic von Mises-Fisher}
\newacronym{pd}{PD}{positive definite}
\newacronym{psd}{PSD}{positive semi-definite}
\newacronym{nd}{ND}{negative definite}
\newacronym{nsd}{NSD}{negative semi-definite}
\newacronym{ode}{ODE}{Ordinary Differential Equation}
\newacronym{pde}{PDE}{Partial Differential Equation}
\newacronym{lhs}{LHS}{left hand side}
\newacronym{rhs}{RHS}{right hand side}
\newacronym{rv}{RV}{random variable}
\newacronym{ae}{AE}{AutoEncoder}
\newacronym{lae}{LAE}{Linear Autoencoder}
\newacronym{vae}{VAE}{Variational Autoencoder}
\newacronym{cvvae}{CV-VAE}{Constant-Variance Variational Autoencoder}
\newacronym{ivae}{iVAE}{Identifiable Variational Autoencoder}
\newacronym{rae}{RAE}{Regularized Autoencoder}
\newacronym{grae}{GRAE}{Gaussian Regularized Autoencoder}
\newacronym{lvm}{LVM}{latent variable model}
\newacronym[longplural=Gaussian Processes]{gp}{GP}{Gaussian Process}
\newacronym{gplvm}{GPLVM}{Gaussian Process Latent Variable Model}
\newacronym{rbf}{RBF}{Radial Basis Function}
\newcommand{\betavae}{$\beta$-\gls{vae}\xspace}
\newacronym{kld}{KL}{Kullback-Leibler Divergence}
\newacronym{elbo}{{\text{\upshape ELBO}}}{evidence lower bound}
\newacronym{pca}{PCA}{Principal Component Analysis}
\newacronym{ppca}{PPCA}{Probabilistic Principal Component Analysis}
\newacronym{ebm}{EBM}{Energy-Based Model}
\newacronym{cca}{CCA}{Canonical Correlation Analysis}
\newacronym{mi}{MI}{Mutual Information}
\newacronym{icm}{ICM}{Independent Causal Mechanisms}
\newacronym{sms}{SMS}{Sparse Mechanism Shift}
\newacronym{sem}{SEM}{Structural Equation Model}
\newacronym{lingam}{LiNGAM}{Linear Non-Gaussian Acyclic Model}
\newacronym{dag}{DAG}{Directed Acyclic Graph}
\newacronym{anm}{ANM}{Additive Noise Model}
\newacronym{cd}{CD}{Causal Discovery}
\newacronym{crl}{CRL}{Causal Representation Learning}
\newacronym{hmm}{HMM}{Hidden Markov Model}
\newacronym{ica}{ICA}{Independent Component Analysis}
\newacronym{nlica}{NLICA}{nonlinear Independent Component Analysis}
\newacronym{bss}{BSS}{Blind Source Separation}
\newacronym{ima}{{\text{\upshape IMA}}}{Independent Mechanism Analysis}
\newacronym{igci}{IGCI}{Information Geometric Causal Inference}
\newacronym{cdf}{CdF}{Causal de Finetti}
\newacronym{nce}{NCE}{Noise Contrastive Estimation}
\newacronym{pcl}{PCL}{Permutation-Contrastive Learning}
\newacronym{tcl}{TCL}{Time-Contrastive Learning}
\newacronym{iia}{IIA}{Independent Innovation Analysis}
\newacronym{ar}{AR}{autoregressive}
\newacronym{var}{VAR}{Vector autoregressive}
\newacronym{nvar}{NVAR}{Nonlinear Vector AutoRegressive}
\newacronym{ai}{AI}{Artificial Intelligence}
\newacronym{ml}{ML}{Machine Learning}
\newacronym{dl}{DL}{Deep Learning}
\newacronym{rl}{RL}{Reinforcement Learning}
\newacronym{rlhf}{RLHF}{Reinforcement Learning from Human Feedback}
\newacronym{ssl}{SSL}{Self-Supervised Learning}
\newacronym{cl}{CL}{Contrastive Learning}
\newacronym{dcl}{DCL}{Debiased Contrastive Learning}
\newacronym{scl}{SCL}{Spectral Contrastive Learning}
\newacronym{gcl}{GCL}{Graph Contrastive Learning}
\newacronym{alphacl}{$\alpha$-CL}{$\alpha$-Contrastive Learning}
\newacronym{arcl}{ArCL}{Augmentation-robust Contrastive Learning}
\newacronym{fce}{FCE}{Flow Contrastive Estimation}
\newacronym{vince}{VINCE}{Variational InfoNCE}
\newacronym{rince}{RINCE}{Robust InfoNCE}
\newacronym{aggnce}{AggNCE}{Aggregated InfoNCE}
\newacronym{mcinfonce}{MCInfoNCE}{Monte-Carlo InfoNCE}
\newacronym{gmc}{GMC}{Geometric Multimodal Contrastive Learning}
\newacronym{looc}{LooC}{Leave-one-out Contrastive Learning}
\newacronym{npc}{NPC}{Negative-Positive Coupling}
\newacronym{cpc}{CPC}{Contrastive Predictive Coding}
\newacronym{nlp}{NLP}{Natural Language Processing}
\newacronym{gdl}{GDL}{Geometric Deep Learning}
\newacronym{msn}{MSN}{Masked Siamese Networks}
\newacronym{ifm}{IFM}{Implicit Feature Modification}
\newacronym{dnn}{DNN}{Deep Neural Network}
\newacronym{nn}{NN}{Neural Network}
\newacronym{ann}{ANN}{Artificial Neural Network}
\newacronym{fm}{FM}{Foundation Model}
\newacronym{llm}{LLM}{Large Language Model}
\newacronym{pcfg}{PCFG}{Probabilistic Context-Free Grammar}
\newacronym{icl}{ICL}{in-context learning}
\newacronym{nc}{NC}{Neural Collapse}
\newacronym{cdt}{CDT}{Class-Dependent Temperature}
\newacronym{mlp}{MLP}{Multi-Layer Perceptron}
\newacronym{fc}{FC}{Fully Connected}
\newacronym{cn}{conv}{Convolutional layer}
\newacronym{cnn}{CNN}{Convolutional Neural Network}
\newacronym{gnn}{GNN}{Graph Neural Network}
\newacronym{rnn}{RNN}{Recurrent Neural Network}
\newacronym{lstm}{LSTM}{Long Short-Term Memory}
\newacronym{gru}{GRU}{Gated Recurrent Unit}
\newacronym{relu}{ReLU}{Rectified Linear Unit}
\newacronym{bn}{BN}{Batch Normalization}
\newacronym{dbn}{DBN}{Decorrelated Batch Normalization}
\newacronym{gan}{GAN}{Generative Adversarial Network}
\newacronym{sgd}{SGD}{Stochastic Gradient Descent}
\newacronym{adam}{ADAM}{Adaptive Moment Estimation}
\newacronym{svd}{SVD}{Singular Value Decomposition}
\newacronym{wls}{WLS}{Weighted Least Squares}
\newacronym{sam}{SAM}{Sharpness-Aware Minimization}
\newacronym{samba}{SAMBA}{SAM-Based Autoencoder}
\newacronym{vi}{VI}{Variational Inference}
\newacronym{mfvi}{MFVI}{Mean Field Variational Inference}
\newacronym{dgp}{DGP}{Data Generating Process}
\newacronym{map}{MAP}{Maximum A Posteriori}
\newacronym{mle}{MLE}{maximum likelihood estimation}
\newacronym{etf}{ETF}{Equiangular Tight Frame}
\newacronym{mse}{MSE}{Mean Squared Error}
\newacronym{mae}{MAE}{Mean Absolute Error}
\newacronym{ce}{{\text{\upshape CE}}}{cross entropy}
\newacronym{sid}{SID}{Structural Intervention Distance}
\newacronym{shd}{SHD}{Structural Hamming Distance}
\newacronym{mcc}{MCC}{Mean Correlation Coefficient}
\newacronym{mig}{MIG}{Mutual Information Gap}
\newacronym{dci}{DCI}{Disentanglement Completeness Informativeness score}
\newacronym{arc}{ARC}{Average Relative Confusion}
\newacronym{acr}{ACR}{Average Confusion Ratio}
\newacronym{api}{API}{Application Programming Interface}
\newacronym{cpu}{CPU}{Central Processing Unit}
\newacronym{gpu}{GPU}{Graphics Processing Unit}
\newacronym{lti}{LTI}{Linear Time-Invariant}
\newacronym{zoh}{ZOH}{Zero-Order Hold}
\newacronym{gt}{{\text{\upshape GT}}}{ground truth}
\newacronym{ood}{OOD}{out-of-distribution}
\newacronym{oov}{OOV}{out-of-variable}
\newacronym{fsm}{FSM}{Finite State Machine}
\newacronym{rasp}{RASP}{Restricted-Access Sequence Processing Language}
\newacronym{ntk}{NTK}{Neural Tangent Kernel}
\newacronym{as}{a.s.}{almost surely}
\newacronym{alev}{a.e.}{almost everywhere}
\newacronym{sos}{SOS}{start-of-sequence}
\newacronym{eos}{EOS}{end-of-sequence}
  \def\gls#1{<#1>}%
  \def\glspl#1{<#1>}%
  \def\acrshort#1{<#1>}%
  \def\acrlong#1{<#1>}%
  \def\acrfull#1{<#1>}%
\definecolor{figblue}{HTML}{4A90E2}
\definecolor{figred}{HTML}{D0021B}
\definecolor{figgreen}{HTML}{2CA02C}
\icmltitlerunning{
Position: 
Understanding LLMs Requires More Than Statistical Generalization}
\begin{document}

\setlength{\parskip}{3pt}
\setlength{\textfloatsep}{15pt}

\twocolumn[
\icmltitle{
Position: 
Understanding LLMs Requires More Than Statistical Generalization}

\icmlsetsymbol{equal}{*}
\icmlsetsymbol{senior}{$\dagger$}

\begin{icmlauthorlist}
\icmlauthor{Patrik Reizinger}{equal,tueb}
\icmlauthor{Szilvia Ujváry}{equal,camb,ucl}
\icmlauthor{Anna Mészáros}{equal,camb}
\icmlauthor{Anna Kerekes}{equal,eth,cls}\\
\vspace{1em}
\icmlauthor{Wieland Brendel}{tueb,ellis,tuai}
\icmlauthor{Ferenc Huszár}{senior,camb}

\end{icmlauthorlist}

\icmlaffiliation{tueb}{Max Planck Institute for Intelligent Systems, Tübingen, Germany}
\icmlaffiliation{ellis}{ELLIS Institute Tübingen, Germany}
\icmlaffiliation{tuai}{Tübingen AI Center, Germany}
\icmlaffiliation{camb}{University of Cambridge, UK}
\icmlaffiliation{eth}{Department of Computer Science, ETH Zurich}
\icmlaffiliation{ucl}{AI Center, UCL, London, UK}
\icmlaffiliation{cls}{Max Planck ETH Center for Learning Systems}

\icmlcorrespondingauthor{Patrik Reizinger}{patrik.reizinger@tuebingen.mpg.de}

\icmlkeywords{Machine Learning, ICML}

\vskip 0.3in
]

\printAffiliationsAndNotice{\icmlEqualContribution \textsuperscript{\textdagger} Senior author} %

\begin{abstract}
    The last decade has seen blossoming research in deep learning theory attempting to answer, ``Why does deep learning generalize?" A powerful shift in perspective precipitated this progress: the study of overparametrized models in the interpolation regime.
    In this paper, we argue that another perspective shift is due, since some of the desirable qualities of LLMs are not a consequence of good statistical generalization and require a separate theoretical explanation. Our core argument relies on the observation that AR probabilistic models are inherently non-identifiable: models zero or near-zero KL divergence apart---thus, equivalent test loss---can exhibit markedly different behaviors. We support our position with mathematical examples and empirical observations, illustrating why non-identifiability has practical relevance through three case studies: (1) the non-identifiability of zero-shot rule extrapolation; (2) the approximate non-identifiability of in-context learning; and (3) the non-identifiability of fine-tunability. We review promising research directions focusing on LLM-relevant generalization measures, transferability, and inductive biases.
\end{abstract}

\newcommand{\szilvi}[1]{\textcolor{red}{[\textbf{Szilvi:} #1]}}

\newcommand{\manna}[1]{\textcolor{purple}{[\textbf{Manna:} #1]}}

\newcommand{\anna}[1]{\textcolor{pink}{[\textbf{Anna:} #1]}}

\section{Introduction}
\label{sec:intro}

\begin{table}[ht]
    \centering
    \setlength{\tabcolsep}{1.6pt}
    \begin{tabular}{lcc} \toprule
          &\textbf{Interpolation regime} & \textbf{Saturation regime}\\\midrule
        \multirow{2}{*}{$\min \gls{loss}_{\mathrm{train}}$}   & non-unique& non-unique\\
        &global min.\ reached& global min.\ reached \\\midrule
        \multirow{2}{*}{$\min \gls{loss}_{\mathrm{test}}$}   & \multirow{2}{*}{{\color{gray}no assumption}} & non-unique \\
        & & global min.\ reached \\ \midrule
        Questions & \makecell{Is $\gls{loss}_{\mathrm{test}}$ \\ small enough?} & \makecell{zero-shot extrapolation\\ \acrlong{icl}\\ transfer, finetunability}\\
        \bottomrule
    \end{tabular}
    \caption{\textbf{Comparison of Interpolation and Saturation regimes:} In the Interpolation regime we assume a global minimum of the training loss is found, but is not unique, so we ask whether the minimum we find generalises well. In the Saturation regime, we further assume that a global minimum of the test loss is found, but even that is not unique. We study additional properties of the minimum found, which are not implied by good generalization.}
    \label{table:regimes}
\end{table}

\Acrfull{ar} language models trained on the next-token prediction objective can have remarkable reasoning~\citep{ouyang2022training,touvron2023llama,wei2022chain}, \gls{icl}~\citep{in_context_bayesian,zhang_trained_2023,min2022rethinking}, and data-efficient fine-tuning capabilities~\citep{brown2020language,liu_same_2023}.\\
Modern theory of deep learning studies neural networks in the~\textit{interpolation regime}~\citep{zhang_understanding_2016,masegosa_learning_2020,Kawaguchi_2022}, \ie, when at the end of training, a model reaches a (non-unique) global minimum of the training loss. Here, the question is whether this model will also perform well on the \textit{in-distribution} test set. Since \glspl{llm} are trained on massive datasets, these models achieve both low training and test loss; thus, they generalize in the statistical sense. However, statistical generalization cannot guarantee good performance on downstream tasks~\citep{liu_same_2023}. \\
\textbf{This position paper argues that we ought to study \glspl{llm} in the {\textit{\textbf{saturation regime}}}}\footnote{The term ``saturation regime'' has also been used to refer to saturation of hidden units \citep{glorot_saturation}. Here we mean exclusively the saturation of the test loss.}~\citep[coined by][]{liu_same_2023} instead (\cref{table:regimes}). In the saturation regime, models reach the (non-unique) global minimum of the test loss during training; since the same minimal test loss cannot distinguish between \gls{ood} model performance~\citep{liu_same_2023}, we should ask \textbf{what additional properties hold for the minimum found by our algorithms}. To formalize such questions, we need to substitute the black box concept of average risk from statistical learning theory with more application-specific goals, \eg, rule extrapolation or the data efficiency of fine-tuning.

\textit{Why is the minimum of the test loss non-unique?} We can use the lens of identifiability to understand this non-trivial question. Namely, unless their support spans the entire space of sequences, \gls{ar} probabilistic models are non-identifiable: different models are indistinguishable by likelihood, even in the limit of infinite data. The study of (non-)identifiability has a vast literature both in statistical inference and causal discovery. These are well-known results; we only aim to highlight the practical implications of non-identifiability for \gls{ar} \glspl{llm}.
\begin{figure*}[t]
\vspace{-1em}
    \centering
    \includegraphics[trim={0 16cm 19cm 0},clip,width=0.62\textwidth]{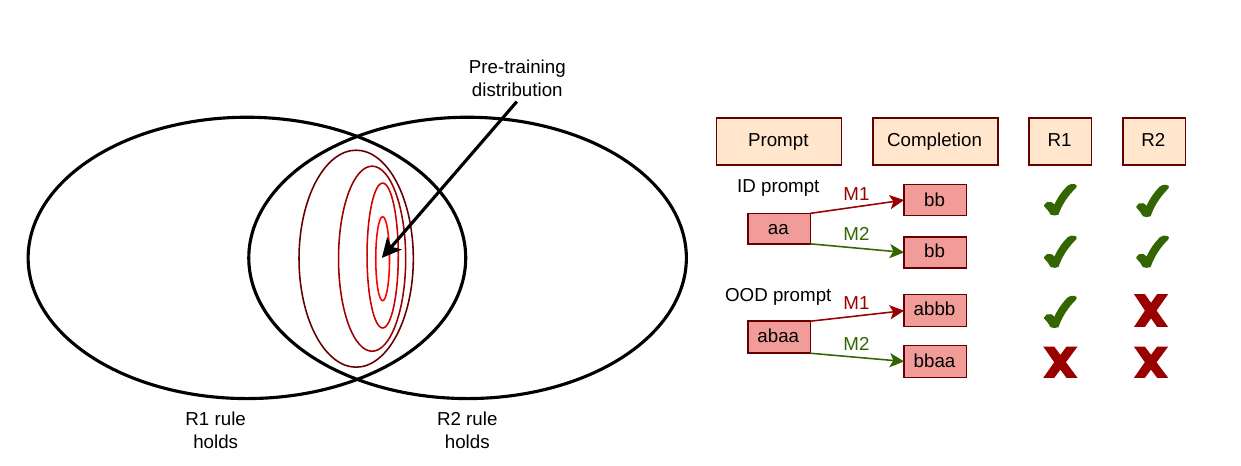}
    \caption{\textbf{Illustration of case study \ref{sec:case-study-1}:} We train a Transformer on a \gls{pcfg} generating sequences of the form $a^nb^n$. \textbf{Left:} This language can be represented as an intersection of two rules: \ref{rule1} the number of $a$s and $b$s match; and \ref{rule2} $a$ never follows a $b$. \textbf{Right:} We consider different models ({\color{red}M1}, {\color{figgreen}M2}) which achieve perfect test loss. On prompts consistent with the $a^nb^n$ grammar (\eg, $aa$) models produce the same completions. However, on prompts that are inconsistent with $a^nb^n$, and thus have probability zero under the pre-training distribution, the models may produce different completions. For these \gls{ood} prompts, we can ask if the completed prompts still satisfy rule \ref{rule1}, which we call rule extrapolation. Rule extrapolation behaviour is not implied by minimal test loss, but may arise due to inductive biases.}
    \label{fig:case-study-1}
\end{figure*}
Our \textbf{contributions} are:
\begin{itemize}[nolistsep,leftmargin=*]
    \item highlighting the limitation of statistical generalization for understanding \gls{ar} \glspl{llm} in the saturation regime (\cref{sec:bg});
    \item demonstrating the relevance of our position through three case studies which provide well-defined starting points to theoretically study \glspl{llm}  (\cref{sec:implications});
    \item summarizing three topics that should be studied in \glspl{llm} in the saturation regime (\cref{sec:position}), and proposing potential research directions (\cref{sec:discussion}).
\end{itemize}

\section{Background}
\label{sec:bg}

\textbf{Statistical generalization} measures whether a model's performance on the training data transfers to unseen test data, assumed to be sampled from the same distribution (\ie, \acrshort{iid}). Classical results in statistical learning theory attempt to bound the generalization gap in terms of uniform notions of the model class' complexity~\citep{Vapnik1971, Vapnik2000, Bartlett2002}. These, however, fail to account for the success of deep learning as argued by \citet{zhang_understanding_2016} and others. More applicable to deep learning are approaches that provide bounds based on the properties of the learning algorithm or the specific hypothesis learned. These include PAC-Bayes ~\citep{dziugaite_computing_2017,perez_ortiz_tighter,lotfi2022so_tight,lotfi2023llm}, information-theoretic~\citep{Russo2016, XuRaginsky2017, Wang2023} and algorithmic stability bounds~\citep{Bousquet02, Deng2021}.

The appeal of focusing on statistical generalization stems from its ``black-box" nature: it can be applied across many domains without changing terminology, be it machine translation or image processing. Stating that an algorithm generalises well requires no domain-specific understanding or qualitative description of any form of generalization behaviour, beyond the specification of a loss function. Slightly more domain-specific thinking is often introduced when one studies \acrfull{ood} generalization~\citep[see][for a review]{Femming2022}, since one needs to describe how the test and training distributions differ.

\textbf{Interpolation regime.}
Overparametrized models gave rise to the \textit{interpolation regime} (\cref{table:regimes}), where a model has enough parameters to (almost) perfectly fit the training data~\citep{zhang_understanding_2016, masegosa2023understanding, Kawaguchi_2022}. In the interpolation regime, the training loss \textit{alone} cannot distinguish whether a model will generalize, yet models that we find by minimizing the training loss typically generalize well. In rare cases, optimization can switch between generalizing and non-generalizing solutions, as observed in grokking \citep{power_grokking_2022}. This observation led to a paradigm shift in the community, inviting researchers to consider training dynamics and the inductive biases enabling statistical generalization instead of only relying on the model class, loss, and dataset structure. In~\cref{sec:position}, we argue that a second paradigm shift is needed: \textit{we should move beyond the interpolation regime and study LLMs in the saturation regime, uncovering the inductive biases responsible for \gls{ood} generalization.}

\textbf{Identifiability of Probabilistic Models.}
Identifiability is an important property of a class of statistical models, determining whether a model can always be uniquely recovered from observed data. In parametric statistical models, identifiability asks whether the parameters of a model are uniquely determined by the data distribution they define~\citep[see \eg ][]{comon1994independent}. In latent variable modeling, we may not care about parameters and ask instead whether the statistical relationship between latent and observed variables can be uniquely determined from the marginal distribution of observables, perhaps up to some invariances~\citep[see, \eg, ][]{hyvarinen_independent_2013}. In causal discovery, one asks whether the causal structure of a distribution (described, \eg, by a specific Markov factorisation) can be uniquely determined from the joint distribution~\citep{hyvarinen_identifiability_2023}.

In machine learning, identifiability can be interpreted as a guarantee that the test loss has a unique minimizer, a unique Bayes optimal model. This is a highly desirable quality as it allows us to reason about properties of this possibly unreachable but unique minimum, for example, predict \gls{ood} extrapolation or the effect of interventions~\citep{pearl_causality_2009}.

\section{Identifiability in \gls{ar} \glspl{llm}}
\label{sec:implications}

In this section, we discuss the identifiability of \gls{ar} probabilistic models. By an \gls{ar} probabilistic model we mean (for some fixed $L \in \mathbb{N}$) a collection $\{p(x_i | x_{1:i-1});L\geq i\geq 1\}$ of conditional distributions, which also define a collection of joint distribution $\{p(x_{1:i}) ; L \geq i\geq1\}$ over sequences. This collection of conditional distributions usually shares a set of parameters $\theta$.
We start by outlining \textit{three important notions of non-identifiability} that one might be interested in when studying such models:

\textbf{(i) Functional non-identifiability}
    happens when the collection of conditionals is not uniquely determined by the collection of joint distributions they define. As we will see, functional identifiability is only guaranteed when all finite prefixes occur with non-zero probability. If this is not the case, two models that differ only in how they complete zero-probability prefixes are indistinguishable by the \gls{kld}. We illustrate implications of this on extrapolation behaviour in \cref{sec:case-study-1};\\
\textbf{(ii) $\varepsilon-$non-identifiablity} 
    is a new term we introduce, which relaxes the notion of functional non-identifiability. It posits that there exist properties of models $p$ that other models $q$ very close to $p$ (at most $\varepsilon$ in \gls{kld}-sense) do not possess. This means that even though a unique global minimum exists, there are near-minima that differ from the global minimum in some important ways. We illustrate that the emergent \gls{icl} ability in some \gls{ar} models is an example of this in \cref{sec:case-study-2};\\
\textbf{(iii) Parameter non-identifiability} 
     means that \textit{functionally equivalent} models can be described by different sets of parameters. While this does not affect the model's zero-shot test performance, it can have significant implications for transfer learning and fine-tuning (\cref{sec:case-study-3}).

\gls{ar} probabilistic models are inherently non-identifiable. Multiple models with perfect generalization may exist and may behave differently. Here we showcase what this means for \gls{ar} \glspl{llm} via three case studies, matching the three notions of non-identifiability from above. Our case studies provide clearly defined, relevant scenarios, which can be used as starting points to study \gls{llm} behavior theoretically.

\subsection{Case Study: Non-Identifiability of rule extrapolation}
\label{sec:case-study-1}

Consider training an \gls{ar} language model $q$ to fit samples from the probabilistic context-free grammar (\gls{pcfg}) $p$ over sequences of the form $a^nb^n$ where $n$ is random. Such a distribution has limited support since there are ungrammatical sequences that occur with probability $0$. Moreover, there exist finite length prefixes $x_{1:l}$ which cannot be completed grammatically, whose marginal probability $p(x_{1:l})$ is thus $0$. We refer to such prefixes as \gls{ood} prompts. We say that $q$ generalizes perfectly (in the statistical sense) if $\kl{p(x_{1:k})}{q(x_{1:k})}=0,$ for some length $k$, \ie, it achieves maximal likelihood both in training and test. If $x_{1:l}$ has zero probability under $p$, the completion distribution $p(x_{l+1:k} \vert x_{1:l})$ is undefined. However, $q$ still defines a distribution over completions $q(x_{l+1:k} \vert x_{1:l})$. Since the \gls{kld} divergence is insensitive to the choice of $q(x_{l+1:k} \vert x_{1:l})$, the completion distribution is non-identifiable.

This means that any property of $q$ that depends only on completions of \gls{ood} prompts is non-identifiable. The $a^nb^n$ grammar can be described as the intersection of two rules:
 \begin{enumerate}[nolistsep,label=(R\arabic*),leftmargin=35pt]
    \item the number of $a$s and $b$s match; and \label{rule1}
    \item  $a$ never follows a $b$.\label{rule2}
\end{enumerate}

Unless a prompt can be completed consistently with both rules, the behaviour of $q$ is non-identifiable. It is meaningful to ask whether a trained model $q$ still respects rule R1 when completing \gls{ood} prompts that break rule R2, such as $abaa$. We call this \emph{rule extrapolation}, illustrated in Fig.\  \ref{fig:case-study-1}.

\begin{tcolorbox}[colback=gray!15!white,colframe=gray!75!black, title={ OOD rule extrapolation is non-identifiable, LLMs still extrapolate due to inductive biases}]
    Trained Transformers have better-than-chance ability to extrapolate rules on \gls{ood} prompts in a zero-shot manner (chance is zero, calculated at initialization, \cf \cref{fig:case-study-1,figure:rule_extrapolation,tab:extrapolation}). Since the \gls{kld} is insensitive to rule extrapolation, we attribute rule extrapolation to inductive biases.
\end{tcolorbox}

\textbf{Empirical demonstration.}
We train a decoder-only Transformer~\citep{vaswani_attention_2017_edit,radford_language_2018} on the  $a^nb^n$ \gls{pcfg} and evaluate zero-shot rule extrapolation (\cref{figure:rule_extrapolation}), measured as the proportion of times \gls{ood} prompts of length $8$ are completed consistently with rule R1.

As hypothesized, we find that the model develops an ability to extrapolate the rule in $43.7\%$ of the cases, although the loss function is agnostic to this. To illustrate this last point, we train two additional models in an adversarial, and an oracle setting, whereby an extra supervised loss term is added with either poisoned or helpful data. Consistently with our expectations, all three models reach the same, minimal test loss, but display widely varying rule extrapolation performance (\cref{figure:rule_extrapolation}; \cf \cref{sec:app_exp} for training details).\\
Our findings demonstrate that \gls{ar} probabilistic language models extrapolate \gls{ood} in meaningful ways. However, this behaviour is not merely a consequence of good generalisation, it arises as a result of additional inductive biases.

\begin{figure}[tb]
    \centering
    \includesvg[width=5.5cm,keepaspectratio]{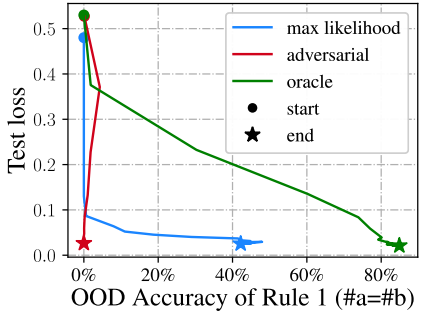}
    \caption{\textbf{\gls{ood} rule extrapolation in Transformers is  better than chance:}
     We trained a Transformer via {\color{figblue}{maximum likelihood}} on the $a^nb^n$ \gls{pcfg}. We evaluated the model on \gls{ood} prompts which are inconsistent with $a^nb^n$, and checked whether the completions obey rule \ref{rule1} ($x$ axis). Two other models, trained by an {\color{figred}{adversarial}} and an {\color{figgreen}{oracle}} process achieved the same test loss but displayed very different rule extrapolation accuracies. This demonstrates that test loss is insensitive to rule extrapolation behaviour and that the $43.7\%$  rule extrapolation accuracy (averaged over 20 seeds; details in \cref{sec:app_exp}) results from inductive biases.} 
    \label{figure:rule_extrapolation}
\end{figure}

\subsection{Case Study: \epsnonident and \acrfull{icl}}
\label{sec:case-study-2}

    In~\cref{sec:case-study-1}, we assumed that the pre-training and test distributions have limited support, \ie, there are \gls{ood} sequences with exactly $0$ probability under the pre-training distribution. A more realistic scenario is where some prompts have a non-zero but vanishingly small probability under the pre-training distribution. With full support, when non-zero probability is placed on all sequences, \gls{ar} probability distributions are identifiable. However, relaxing the strict definition of identifiability and considering models near-equivalent if their test performance is barely distinguishable with the \gls{kld}, we still find near-equivalent models that may behave radically differently on low-probability sequences, despite having access to infinite data. 
    We call this \epsnonident (for some small $\varepsilon>0$) and define it informally (\cf \cref{sec:app_ident}):
    \begin{definition}[\epsnonident of distributional properties (informal)]\label{definition:epsnonident_informal}
        A distributional property of $p$ is $\varepsilon-$non-identifiable if there is a distribution $q$ such that ${\kl{p}{q} \leq \varepsilon}$, but $q$ does not have the property of $p$.
    \end{definition}
    Contrary to traditional definitions, ours relaxes the distributional equivalence by admitting a non-zero \gls{kld}, and is formulated about having a property (\eg, \gls{icl}). This distinction might seem subtle, yet is important since in practice, the goal is to have a well-performing model; minimizing a loss can be insufficient~\citep{liu_same_2023,saunshi_understanding_2022,rusak_content_2022_edit,tay_scale_2022}. Indeed, this observation is widely accepted in evaluating generative models~\citep[20.4]{murphy_probabilistic_2022}, where the trade-off between the likelihood, representation, and sample quality is well known~\citep{inferenceMaximumLikelihood}.

    \begin{tcolorbox}[colback=gray!15!white,colframe=gray!75!black, title={\Acrfull{icl} can be non-identifiable, LLMs still can do ICL}]
        When the pre-training distribution is a mixture of \glspl{hmm} as in \citet{in_context_bayesian}, we demonstrate that \gls{icl} is $\varepsilon-$non-identifiable, even with infinite data, due to the insensitivity of the \gls{kld}. However, \glspl{llm} can be still in-context learners; we need to understand why.
    \end{tcolorbox}

\textbf{Example: \epsnonident of \gls{icl}.}
    \begin{figure}[t]
        \centering
        \tikzset{every picture/.style={line width=0.75pt}} %

\begin{tikzpicture}[x=0.75pt,y=0.75pt,yscale=-.8,xscale=.8]
\draw  [color={rgb, 255:red, 0; green, 0; blue, 0 }  ,draw opacity=1 ][fill={rgb, 255:red, 221; green, 218; blue, 218 }  ,fill opacity=0.59 ] (120.73,45.99) .. controls (140.72,27.74) and (334.55,22.62) .. (369.2,26.28) .. controls (403.86,29.94) and (339.27,124.31) .. (310.71,113.34) .. controls (282.15,102.36) and (138.01,130.16) .. (126.57,102.36) .. controls (115.13,74.56) and (100.73,64.24) .. (120.73,45.99) -- cycle ;
\draw  [fill={rgb, 255:red, 221; green, 218; blue, 218 }  ,fill opacity=0.59 ] (104.84,145.47) -- (151.68,145.47) -- (151.68,165.81) -- (104.84,165.81) -- cycle ;
\draw  [fill={rgb, 255:red, 255; green, 163; blue, 28 }  ,fill opacity=0.43 ] (330.84,145.93) -- (377.68,145.93) -- (377.68,166.27) -- (330.84,166.27) -- cycle ;
\draw  [fill={rgb, 255:red, 128; green, 128; blue, 128 }  ,fill opacity=0.59 ] (218.95,145.4) -- (265.8,145.4) -- (265.8,165.74) -- (218.95,165.74) -- cycle ;
\draw  [fill={rgb, 255:red, 0; green, 0; blue, 0 }  ,fill opacity=1 ] (131.57,87.25) .. controls (131.57,85.56) and (132.81,84.19) .. (134.35,84.19) .. controls (135.88,84.19) and (137.13,85.56) .. (137.13,87.25) .. controls (137.13,88.93) and (135.88,90.3) .. (134.35,90.3) .. controls (132.81,90.3) and (131.57,88.93) .. (131.57,87.25) -- cycle ;
\draw  [color={rgb, 255:red, 0; green, 0; blue, 0 }  ,draw opacity=1 ][fill={rgb, 255:red, 128; green, 128; blue, 128 }  ,fill opacity=0.59 ] (178.5,62.45) .. controls (193,46.73) and (176.1,47.15) .. (253,42.17) .. controls (329.9,37.19) and (357.62,95.2) .. (289,90.17) .. controls (220.38,85.15) and (251,108.17) .. (228,102.17) .. controls (205,96.17) and (164,78.17) .. (178.5,62.45) -- cycle ;
\draw  [color={rgb, 255:red, 0; green, 0; blue, 0 }  ,draw opacity=1 ][fill={rgb, 255:red, 255; green, 163; blue, 28 }  ,fill opacity=0.43 ] (218,115) .. controls (241,104.04) and (218,56.33) .. (237,51.33) .. controls (252.61,47.22) and (259.45,54.59) .. (255.84,76.77) .. controls (255.06,81.58) and (253.78,87.09) .. (252,93.33) .. controls (242,128.33) and (184,155) .. (170,133) .. controls (156,111) and (195,125.96) .. (218,115) -- cycle ;
\draw  [fill={rgb, 255:red, 0; green, 0; blue, 0 }  ,fill opacity=1 ] (238.57,79.25) .. controls (238.57,77.56) and (239.81,76.19) .. (241.35,76.19) .. controls (242.88,76.19) and (244.13,77.56) .. (244.13,79.25) .. controls (244.13,80.93) and (242.88,82.3) .. (241.35,82.3) .. controls (239.81,82.3) and (238.57,80.93) .. (238.57,79.25) -- cycle ;

\draw (267.8,148.8) node [anchor=north west][inner sep=0.75pt]    {$KL=0$};
\draw (153.68,148.87) node [anchor=north west][inner sep=0.75pt]    {$KL\leq \varepsilon $};
\draw (379.68,148.93) node [anchor=north west][inner sep=0.75pt]   [align=left] {$\displaystyle ICL$};
\draw (131.8,63.8) node [anchor=north west][inner sep=0.75pt]    {$q$};
\draw (239,54.73) node [anchor=north west][inner sep=0.75pt]    {$p$};

\end{tikzpicture}
        \caption{\textbf{Vanishingly small \gls{kld} cannot capture \acrfull{icl}:}
        illustration of \cref{our_theorem}, showing that when $p$ displays \gls{icl} property, there exists a distribution $q$ that is $\varepsilon-$close in KL divergence, which has no \gls{icl} ability.}
        \label{fig:in_context}
    \end{figure}
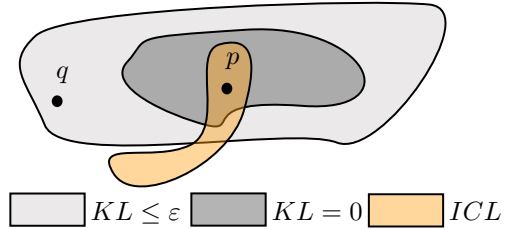
    \gls{icl} refers to the ability of a model to learn a downstream task based on a prompt consisting of input-output examples. That is, if the prompt includes sufficient input-output pairs $( x_i, y_i )$ and a test input $x_{\text{test}}$ then the model can predict the correct output $y_{\text{test}}$ without fine-tuning, only relying on the pairs $( x_i, y_i )$ in inferring the task. 

    Numerous theories exist in the literature, which prove the emergence of \gls{icl} under simplified settings. For a Transformer consisting of linear self-attention layers trained on regression data, \gls{icl} is shown to emerge as implicit gradient descent \citep{tengyumaiclmeta, von_oswald_transformers_2023, tengyumaICLmetaloss}. This view interprets Transformers as meta-models that ``learn to learn'' in-context tasks within their forward pass. The framework requires specific model parameters. Other approaches prove \gls{icl} using assumptions on the structure of the training data ~\citep{in_context_bayesian, wang2023large}.
   We highlight an unmentioned type of fragility of these theories for real-world \glspl{llm}, which occurs in practice as soon as we slightly deviate from these assumptions. 
    In the setting of \citep{in_context_bayesian}, we prove that \gls{icl} can be $\varepsilon-$non-identifiable, i.e., for a distribution $p$, with \gls{icl} ability by construction, we construct another distribution $q$ within $\varepsilon$ \gls{kld} divergence that provably does not exhibit \gls{icl}. 

    Our result demonstrates that the emergence of \gls{icl} is not a direct consequence of minimizing the negative log-likelihood. Indeed, an \gls{llm} that fits the pre-training distribution within $\varepsilon$ \gls{kld} divergence,
    can easily not exhibit \gls{icl}. We detail the implications for the saturation regime in \cref{appx:bicikli}.

 \citet{in_context_bayesian} demonstrate that for a mixture of \glspl{hmm} pre-training distribution $p$, the \gls{llm} is an in-context learner in the limit of infinite examples in the prompt. That is, it produces completions aligning with the predictions of the prompt distribution. In accordance with their notation, let $p_0(\theta)$ be a prior distribution on the latent concepts $\theta \in \Theta$, and for each $\theta$ let the distribution $p(o_1, \dots, o_T | \theta)$ be a \gls{hmm} for a token sequence $o_1, \dots, o_T
 $ of a fixed pre-training document length $T$. First, a concept $\theta$ is sampled from $p_0(\theta)$, then a document is generated according to $p(o_1, \dots, o_T | \theta)$.  
 Therefore, the pre-training distribution (on token sequences) is a mixture of \glspl{hmm} induced by these conditionals.
    \begin{equation}
        p (o_1, \dots, o_T) = \int_{\theta \in \Theta} p(o_1, \dots, o_T | \theta) p_0(\theta) \, d \theta.
    \end{equation}
The prompt for in-context learning is a concatenation of $n$ independent input-output pairs $(x_i, y_i)$ and a test example $x_{\text{test}}$ each separated by a special delimiter token $o^{\text{delim}}$. Each example sampled from $p(\cdot|\theta^*)$, where $\theta^*$ is the ground-truth concept. The distribution of the concatenated examples are called the prompt distribution $p_{\text{prompt}}$ 
\begin{flalign}
     ( S_n, x_{\text{test}}) &= (x_1, y_1, o^{\text{delim}}, x_2, y_2, o^{\text{delim}},\nonumber\\
     &\dots, x_n, y_n, o^{\text{delim}}, x_{\text{test}}) \sim p_{\text{prompt}}.
\vspace{-0.5em}
\end{flalign}

    In \gls{icl}, the goal is to predict the test output $y_{\text{test}}$ by predicting the next token. $y_{\text{test}}$ is defined by $\argmax_y p_{\text{prompt}} (y|x_{\text{test}})$. Details on the notation and the model are in \cref{appx:framework}.\\
    We say that the model is an in-context learner
    if, as $n\! \rightarrow\! \infty,$ 
    \begin{equation}
    \label{incontext}
        \argmax_y p(y|S_n, x_{\text{test}}) \rightarrow \argmax_y p_{\text{prompt}} (y|x_{\text{test}}).
    \end{equation}
    
\vspace{-0.5em}    \citet{in_context_bayesian} proved that if certain assumptions hold (\cref{appx:framework}) and the pre-training distribution is a mixture of \glspl{hmm}, then \gls{icl} occurs when the number of examples $n$ is large enough. Thus,
    the two sides of \eqref{incontext} become equal.
    Let $n_0$ denote the threshold example sequence length, after which $p$ satisfies in-context learning. 
    However, as we show, matching the pre-training distribution up to $\varepsilon>0$ \gls{kld} cannot guarantee \gls{icl}, even with increasing prompt size.
    \begin{restatable}{proposition}{ourtheorem}[\epsnonident of \gls{icl}]\label{our_theorem}
        Let $N$ be the length of a prompt $(S_n, x_{\text{test}})$. For all $\varepsilon >0$, there exists $n_1 \geq n_0$, such that for all $n \geq n_1$, there exists a distribution $q_n$ close to a mixture of \glspl{hmm} $p$ in \gls{kld} divergence 
        \begin{align*}
            \kl{ p(o_1, \dots, o_N)}{ q_n(o_1, \dots, o_N)}\leq \varepsilon,\ \text{s.t.}\\
            \argmax_y q_n(y|S_n, x_{\text{test}}) \neq \argmax_y p_{\text{prompt}} (y|x_{\text{test}}).
        \end{align*}
    \end{restatable}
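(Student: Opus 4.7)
The strategy is to exploit the exponential rarity of prompt-shaped prefixes under the mixture of \glspl{hmm} $p$ and construct $q_n$ as a surgical perturbation of $p$ that leaves most of the distribution untouched but flips the next-token argmax on every prompt. Because the affected slice carries vanishing $p$-mass, the incurred \gls{kld} stays below any prescribed $\varepsilon$, while the flip itself is uniform over prompts, so \gls{icl} provably fails under $q_n$.

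Let $\mathcal{P}_n$ denote the set of length-$(N-1)$ prompt prefixes $s=(S_n,x_{\text{test}})$ and fix $\gamma\in(1/2,1)$. For each $s\in\mathcal{P}_n$ write $y^{\star}(s)=\arg\max_y p_{\text{prompt}}(y\mid x_{\text{test}})$ and pick any $y'(s)\neq y^{\star}(s)$. Define
\[
q_n(o_{1:N})=
\begin{cases}
p(o_{1:N-1})\bigl[(1-\gamma)\,p(o_N\mid o_{1:N-1})+\gamma\,\mathbb{1}[o_N=y'(o_{1:N-1})]\bigr], & o_{1:N-1}\in\mathcal{P}_n,\\
p(o_{1:N}), & \text{otherwise.}
\end{cases}
\]
Summing over $o_N$ preserves the marginal on length-$(N-1)$ prefixes, so $q_n$ is a proper distribution. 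Since $q_n(y'(s)\mid s)\ge\gamma>1-\gamma\ge q_n(y\mid s)$ for every $y\neq y'(s)$, the argmax under $q_n$ equals $y'(s)\neq y^{\star}(s)$ for every prompt, giving the inequality required by the conclusion.

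For the \gls{kld}, the summand vanishes on prefixes outside $\mathcal{P}_n$. Inside $\mathcal{P}_n$, each term with $y\ne y'(s)$ contributes $p(y\mid s)\log\tfrac{1}{1-\gamma}$, while the $y=y'(s)$ term is non-positive; hence the conditional \gls{kld} per prefix is at most $-\log(1-\gamma)$, yielding the uniform bound
\[
\kl{p}{q_n}\le -\log(1-\gamma)\cdot p(\mathcal{P}_n).
\]

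The main obstacle is to control $p(\mathcal{P}_n)$. Under the structural assumptions of \citet{in_context_bayesian}, transitions into the delimiter state are rare, so requiring the delimiter at all $n$ prescribed positions (with the prescribed inter-delimiter example pattern) forces the sequence through $\Omega(n)$ low-probability transitions, giving $p(\mathcal{P}_n)\le c\rho^n$ for some $\rho\in(0,1)$; any bound of this flavour suffices. Given $\varepsilon>0$, pick $n_1\ge n_0$ so that $-\log(1-\gamma)\cdot p(\mathcal{P}_n)\le\varepsilon$ for all $n\ge n_1$, which yields the desired family $\{q_n\}_{n\ge n_1}$.
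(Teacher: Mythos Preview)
Your proof is correct and follows essentially the same three-step strategy as the paper: modify $p$ only on prompt-structured sequences, bound the \gls{kld} by a constant times the $p$-mass of such sequences, and then use the exponential decay of that mass (via the delimiter-transition bound $c_2<1$ from the assumptions of \citet{in_context_bayesian}) to fall below $\varepsilon$. The one variation is the perturbation itself: you mix the conditional with a point mass at $y'(s)$, whereas the paper swaps the two largest values of $p(y\mid S_n,x_{\text{test}})$; your construction yields a cleaner uniform log-ratio bound $-\log(1-\gamma)$ without the case analysis on $a_1,a_2$, but the overall argument is the same.
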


    \begin{proof}[Proof (Sketch)]
        We construct a distribution $q_n$ that matches $p$ everywhere, except for a few sequences ending with a prompt structure. 
        Changing the highest and second-highest probabilities changes the output of the $\argmax$.
        Then we bound \kl{p}{q_n} and exploit that almost all conditionals are the same (except those we changed).
        Since the probability of prompts goes to zero as their length $n\to \infty$, we conclude that the \gls{kld} converges to zero. The proof is in \cref{appx:proof}.
    \end{proof}

\subsection{Case Study: Parameter Non-identifiability and Fine-tuning}
\label{sec:case-study-3}
\begin{figure}[t]
    \centering
    \tikzset{every picture/.style={line width=0.75pt}} %

\begin{tikzpicture}[x=0.75pt,y=0.75pt,yscale=-.65,xscale=.9]
\draw  [fill={rgb, 255:red, 221; green, 218; blue, 218 }  ,fill opacity=0.3 ] (292.81,25) -- (440.5,25) -- (440.5,202) -- (292.81,202) -- cycle ;
\draw  [fill={rgb, 255:red, 221; green, 218; blue, 218 }  ,fill opacity=0.3 ] (118.5,25) -- (255.26,25) -- (255.26,202) -- (118.5,202) -- cycle ;
\draw  [fill={rgb, 255:red, 221; green, 218; blue, 218 }  ,fill opacity=1 ] (320.77,52.57) .. controls (324,43.83) and (342.64,44.62) .. (362.41,54.35) .. controls (382.17,64.08) and (395.58,79.06) .. (392.34,87.81) .. controls (389.11,96.56) and (370.47,95.76) .. (350.7,86.03) .. controls (330.94,76.3) and (317.53,61.32) .. (320.77,52.57) -- cycle ;
\draw [color={rgb, 255:red, 74; green, 97; blue, 226 }  ,draw opacity=1 ][line width=1.5]    (187.37,72.1) .. controls (183.53,69.91) and (177.17,56.61) .. (172.54,68.84) .. controls (166.98,80.04) and (165.26,52.27) .. (161.41,61.7) .. controls (160.37,64.75) and (164.96,94.19) .. (158.5,88) .. controls (150.71,80.47) and (146.5,60) .. (138.5,72) .. controls (134.5,82) and (135.01,85.49) .. (130.29,80.52) .. controls (121.46,72.07) and (128.87,93.43) .. (132.57,103.68) ;
\draw [shift={(133.53,106.36)}, rotate = 250.19] [color={rgb, 255:red, 74; green, 97; blue, 226 }  ,draw opacity=1 ][line width=1.5]    (9.95,-2.99) .. controls (6.32,-1.27) and (3.01,-0.27) .. (0,0) .. controls (3.01,0.27) and (6.32,1.27) .. (9.95,2.99)   ;
\draw  [fill={rgb, 255:red, 74; green, 97; blue, 226 }  ,fill opacity=1 ] (130.75,109.41) .. controls (130.75,107.73) and (132,106.36) .. (133.53,106.36) .. controls (135.07,106.36) and (136.31,107.73) .. (136.31,109.41) .. controls (136.31,111.1) and (135.07,112.47) .. (133.53,112.47) .. controls (132,112.47) and (130.75,111.1) .. (130.75,109.41) -- cycle ;
\draw [color={rgb, 255:red, 245; green, 163; blue, 28 }  ,draw opacity=1 ][line width=1.5]    (201.49,146.96) .. controls (197.65,144.77) and (192.22,133.72) .. (187.59,145.94) .. controls (182.02,157.15) and (184.35,150.57) .. (180.5,160) .. controls (175.86,172.23) and (171.83,150.02) .. (162.56,147.98) .. controls (155.5,156) and (159.43,154.83) .. (158.5,164) .. controls (158.5,175.21) and (150.84,167.96) .. (142.5,170) .. controls (134.14,173.34) and (136.13,163.81) .. (137.32,158.25) ;
\draw [shift={(137.87,155.35)}, rotate = 95.76] [color={rgb, 255:red, 245; green, 163; blue, 28 }  ,draw opacity=1 ][line width=1.5]    (9.95,-2.99) .. controls (6.32,-1.27) and (3.01,-0.27) .. (0,0) .. controls (3.01,0.27) and (6.32,1.27) .. (9.95,2.99)   ;
\draw  [fill={rgb, 255:red, 245; green, 163; blue, 28 }  ,fill opacity=1 ] (135.15,152.48) .. controls (135.15,150.9) and (136.37,149.62) .. (137.87,149.62) .. controls (139.37,149.62) and (140.59,150.9) .. (140.59,152.48) .. controls (140.59,154.06) and (139.37,155.35) .. (137.87,155.35) .. controls (136.37,155.35) and (135.15,154.06) .. (135.15,152.48) -- cycle ;
\draw [color={rgb, 255:red, 214; green, 16; blue, 16 }  ,draw opacity=1 ][line width=1.5]    (187.37,72.1) .. controls (209.49,44.87) and (300.67,60.58) .. (325.02,138.3) ;
\draw [shift={(325.73,140.67)}, rotate = 253.77] [color={rgb, 255:red, 214; green, 16; blue, 16 }  ,draw opacity=1 ][line width=1.5]    (14.21,-4.28) .. controls (9.04,-1.82) and (4.3,-0.39) .. (0,0) .. controls (4.3,0.39) and (9.04,1.82) .. (14.21,4.28)   ;
\draw [color={rgb, 255:red, 214; green, 16; blue, 16 }  ,draw opacity=1 ][line width=1.5]    (200.35,145.25) .. controls (221.63,107.11) and (277.67,107.71) .. (323.64,141.21) ;
\draw [shift={(325.73,142.77)}, rotate = 217.08] [color={rgb, 255:red, 214; green, 16; blue, 16 }  ,draw opacity=1 ][line width=1.5]    (14.21,-4.28) .. controls (9.04,-1.82) and (4.3,-0.39) .. (0,0) .. controls (4.3,0.39) and (9.04,1.82) .. (14.21,4.28)   ;
\draw  [fill={rgb, 255:red, 0; green, 0; blue, 0 }  ,fill opacity=1 ] (197.57,145.25) .. controls (197.57,143.56) and (198.81,142.19) .. (200.35,142.19) .. controls (201.88,142.19) and (203.13,143.56) .. (203.13,145.25) .. controls (203.13,146.93) and (201.88,148.3) .. (200.35,148.3) .. controls (198.81,148.3) and (197.57,146.93) .. (197.57,145.25) -- cycle ;
\draw  [fill={rgb, 255:red, 0; green, 0; blue, 0 }  ,fill opacity=1 ] (184.59,72.1) .. controls (184.59,70.41) and (185.83,69.04) .. (187.37,69.04) .. controls (188.91,69.04) and (190.15,70.41) .. (190.15,72.1) .. controls (190.15,73.79) and (188.91,75.16) .. (187.37,75.16) .. controls (185.83,75.16) and (184.59,73.79) .. (184.59,72.1) -- cycle ;
\draw [color={rgb, 255:red, 245; green, 163; blue, 28 }  ,draw opacity=1 ][line width=1.5]    (325.73,146.4) .. controls (327.69,149.74) and (326.1,163.07) .. (333.36,156.18) .. controls (340.92,150.46) and (339.34,166.2) .. (344.58,161.19) .. controls (349.82,157.01) and (349.62,147.25) .. (352.1,154.72) .. controls (355.06,163.77) and (352.11,163.54) .. (359.54,157.85) .. controls (365.92,152.04) and (358.25,163.31) .. (365.71,160.24) .. controls (371.94,157.03) and (377.33,166.15) .. (380.4,171.36) ;
\draw [shift={(381.93,173.85)}, rotate = 235.29] [color={rgb, 255:red, 245; green, 163; blue, 28 }  ,draw opacity=1 ][line width=1.5]    (9.95,-2.99) .. controls (6.32,-1.27) and (3.01,-0.27) .. (0,0) .. controls (3.01,0.27) and (6.32,1.27) .. (9.95,2.99)   ;
\draw  [fill={rgb, 255:red, 0; green, 0; blue, 0 }  ,fill opacity=1 ] (323.3,143.53) .. controls (323.3,141.95) and (324.39,140.67) .. (325.73,140.67) .. controls (327.08,140.67) and (328.17,141.95) .. (328.17,143.53) .. controls (328.17,145.12) and (327.08,146.4) .. (325.73,146.4) .. controls (324.39,146.4) and (323.3,145.12) .. (323.3,143.53) -- cycle ;
\draw  [fill={rgb, 255:red, 245; green, 163; blue, 28 }  ,fill opacity=1 ] (379.84,177.04) .. controls (379.66,175.47) and (380.6,174.05) .. (381.93,173.85) .. controls (383.27,173.66) and (384.49,174.78) .. (384.66,176.35) .. controls (384.84,177.92) and (383.9,179.34) .. (382.57,179.53) .. controls (381.24,179.72) and (380.01,178.61) .. (379.84,177.04) -- cycle ;
\draw [color={rgb, 255:red, 74; green, 97; blue, 226 }  ,draw opacity=1 ][line width=1.5]    (327.58,141.85) .. controls (331.43,140.09) and (338.01,143.33) .. (332.98,131.54) .. controls (329.2,119.65) and (345.14,131.02) .. (341.44,121.7) .. controls (337.32,112.08) and (334.54,109.46) .. (342.64,108.12) .. controls (350.73,111.82) and (359.18,110.08) .. (345.5,98) .. controls (341.33,87.71) and (360.5,111) .. (359.5,97) .. controls (347.88,96.61) and (353.46,83.44) .. (357.57,75.66) ;
\draw [shift={(358.99,73.06)}, rotate = 119.26] [color={rgb, 255:red, 74; green, 97; blue, 226 }  ,draw opacity=1 ][line width=1.5]    (9.95,-2.99) .. controls (6.32,-1.27) and (3.01,-0.27) .. (0,0) .. controls (3.01,0.27) and (6.32,1.27) .. (9.95,2.99)   ;
\draw    (394.98,109.29) -- (387.27,94.65) -- (378.28,77.58) ;
\draw [shift={(377.34,75.81)}, rotate = 62.22] [color={rgb, 255:red, 0; green, 0; blue, 0 }  ][line width=0.75]    (10.93,-3.29) .. controls (6.95,-1.4) and (3.31,-0.3) .. (0,0) .. controls (3.31,0.3) and (6.95,1.4) .. (10.93,3.29)   ;
\draw  [fill={rgb, 255:red, 74; green, 97; blue, 226 }  ,fill opacity=1 ] (356.55,70.19) .. controls (356.55,68.61) and (357.64,67.33) .. (358.99,67.33) .. controls (360.33,67.33) and (361.42,68.61) .. (361.42,70.19) .. controls (361.42,71.77) and (360.33,73.06) .. (358.99,73.06) .. controls (357.64,73.06) and (356.55,71.77) .. (356.55,70.19) -- cycle ;

\draw (188.44,72.67) node [anchor=north west][inner sep=0.75pt]  [xscale=0.8,yscale=0.8]  {$\theta _{1}$};
\draw (139.6,97.8) node [anchor=north west][inner sep=0.75pt]  [color={rgb, 255:red, 76; green, 97; blue, 226 }  ,opacity=1 ,xscale=0.8,yscale=0.8]  {$\theta '_{1}$};
\draw (201.42,145.82) node [anchor=north west][inner sep=0.75pt]  [xscale=0.8,yscale=0.8]  {$\theta _{2}$};
\draw (138.54,132.65) node [anchor=north west][inner sep=0.75pt]  [xscale=0.8,yscale=0.8]  {$\textcolor[rgb]{0.96,0.64,0.11}{\theta '}\textcolor[rgb]{0.96,0.64,0.11}{_{2}}$};
\draw (310.41,147.45) node [anchor=north west][inner sep=0.75pt]  [xscale=0.8,yscale=0.8]  {$P$};
\draw (337.5,55.28) node [anchor=north west][inner sep=0.75pt]  [xscale=0.8,yscale=0.8]  {$\textcolor[rgb]{0.3,0.38,0.89}{P'}\textcolor[rgb]{0.3,0.38,0.89}{_{1}}$};
\draw (382.85,172.99) node [anchor=north west][inner sep=0.75pt]  [xscale=0.8,yscale=0.8]  {$\textcolor[rgb]{0.96,0.64,0.11}{P'}\textcolor[rgb]{0.96,0.64,0.11}{_{2}}$};
\draw (120.86,29.6) node [anchor=north west][inner sep=0.75pt]  [xscale=0.8,yscale=0.8] [align=left] {{\footnotesize parameter space}};
\draw (301.92,29.06) node [anchor=north west][inner sep=0.75pt]  [font=\footnotesize,xscale=0.8,yscale=0.8] [align=left] {distribution space};
\draw (352.55,111.32) node [anchor=north west][inner sep=0.75pt]  [font=\footnotesize,xscale=0.8,yscale=0.8] [align=left] {good downstream\\performance};

\end{tikzpicture}
    \caption{\textbf{Illustration of parameter non-identifiability:} Two sets of parameters ($\theta_1, \theta_2$) may describe the same AR \gls{llm} and thus achieve the same test loss and perform identically in benchmarks. When fine-tuned on the same data, parameter-dependent inductive biases may push the two models apart, and it is possible that, say, $\theta_1$ enables significantly more data-efficient fine-tuning than $\theta_2$.}
    \label{fig:case_study_3}
\end{figure}
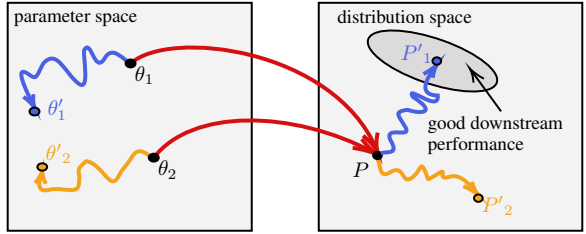

    A neural network's parametrization affects its learning dynamics~\citep{saxe_exact_2014,jacot_neural_2020,dinh_sharp_2017}, which implies \textit{parameter non-identifiability} (\cref{fig:case_study_3}), \ie, functionally equivalent models can behave differently during fine-tuning and transfer\footnote{\textit{Fine-tuning} is training a pre-trained model with a new loss, or with 
    a new data set; \textit{transfer} is when the model can also change after pre-training, \eg, by applying a readout layer.}.
    Parameter non-identifiability is relevant in \glspl{llm}, since large pre-trained models are often fine-tuned on new data sets to solve specialized tasks. For example, given the same pre-training loss, a narrow and wide Transformer can perform better downstream than a wide and shallow one~\cite{tay_scale_2022}.
    \citet{liu_same_2023} demonstrates parameter non-identifiability in a clever experiment: they ``embed" a small Transformer into a larger one by maintaining functional equivalence and demonstrate that the different architectural constraints of the larger model interact differently with the optimization method: despite having the same pre-training loss, the optimization will not prefer the embedded Transformer, but one with flatter minima, yielding a $10\%$ difference in downstream accuracy after fine-tuning. 
    This example highlights the need to understand what parametrizations are useful for improving fine-tuning and transfer in \glspl{llm}.
    A prominent method for \gls{llm} fine-tuning is \gls{rlhf}; recent results suggest that its success on \gls{ood} data might be due to an inductive bias. The concurrent study by \citet{kirk_understanding_2024} empirically demonstrated better \gls{ood} performance than supervised fine-tuning---at the cost of reduced diveristy of the generated completions. The authors reason, based on arguments from~\citep{goldberg_reinforcement_2023,xu_generalization_2022} that the \gls{rlhf} objective has an inductive bias: namely, it forces the \gls{llm} to optimize its policy for \gls{ood} data.
   
\begin{tcolorbox}[colback=gray!15!white,colframe=gray!75!black, title={We need to understand how inductive biases in LLMs affect fine-tuning performance}]
    Functionally equivalent models can have different parametrizations. These are indistinguishable by the test loss but may differ after fine-tuning in downstream performance, as inductive biases in different parametrizations affect gradient dynamics (\cref{fig:case_study_3}).
\end{tcolorbox}

\section{The saturation regime}
\label{sec:position}

In \cref{sec:implications}, we presented case studies to support our position that statistical generalization cannot explain the most interesting phenomena in \glspl{llm}. We believe that we need a paradigm shift akin to that brought about by the discovery of the interpolation regime. Our attention should be directed towards the regime where we can assume our model achieves both near-zero train and test loss. We should study inductive biases that give rise to beneficial qualities, not implied by statistical generalization.\\
The term saturation regime was coined by \citet{liu_same_2023} in the context of transfer learning to argue that a minimal test pre-training loss does not guarantee the transfer performance observed in practice (\cref{sec:case-study-3}). We adopt this name to study a broader set of phenomena, \eg zero-shot \gls{ood} extrapolation (\cref{sec:case-study-1}) and \gls{icl} (\cref{sec:case-study-2}).\\
We focus on \gls{ar} \glspl{llm}, which are in the intersection of probabilistic generative models and \gls{ssl}/transfer learning methods. As our case studies show, all the questions raised by non-identifiability, are relevant and interesting for \gls{ar} \glspl{llm}---\eg, \gls{icl} or zero-shot prompting cannot be interpreted for non-\gls{ar} \glspl{llm} such as BERT~\citep{devlin2019bert}, though those can also operate in the saturation regime. Some of our case studies, e.g. transfer (\cref{sec:case-study-3}) remain relevant in a much wider context, including vision models in \gls{ssl}, where theoretical studies of relevant inductive biases already exist~\citep{haochen_theoretical_2022}.

\begin{tcolorbox}[colback=figblue!15!white,colframe=figblue!75!black, title={\textbf{Position:} we need to study LLMs in the saturation regime}]
In this regime, the relevant questions are 
    \begin{enumerate}[nolistsep,leftmargin=*,label=(\roman*)]
        \item assessing \glspl{llm} with better generalization measures;
        \item understanding transferability; and
        \item studying the inductive biases enabling them. 
    \end{enumerate} 
\end{tcolorbox}
We highlight our arguments for these research directions, then review relevant research in \cref{sec:discussion}. We also suggest extensions for identifiability to capture the desired properties (generalization, transfer) and inductive biases.

    \textbf{(i) Generalization measures.} In the saturation regime, the loss cannot capture the structures and properties of interest (such as the structure of natural languages); thus, we advocate for studying other (\gls{ood}) generalization measures: compositional, systematic, and symbolic generalization (\cref{better-gen-measures}). We argue that due to their well-defined structure, formal languages such as \glspl{pcfg} are ideal for studying these generalization measures in \glspl{llm}.

    \textbf{(ii) Transferability.} The size of state-of-the-art \glspl{llm} prohibits training them from scratch, making transferability crucial. Understanding and controlling transferability in \glspl{llm} is key to building strong general-purpose models and preventing them from being fine-tuned for harmful tasks \citep{qi2023finetuning_harmful}. Since current metrics cannot reliably capture transfer performance (\cf \cref{sec:case-study-3}), we suggest developing suitable metrics for fine-tuning abilities in \glspl{llm}.

   \textbf{(iii) Inductive biases} are widely researched in machine learning, \eg, flatness \citep{sam}, gradient noise \citep{jiang2019fantastic_beasts}, information geometry \citep{Hessian, gFg} and simplicity bias \cite{simplicitybiasSGD}. As opposed to computer vision~\citep{klindt_towards_2021,geirhos_generalisation_2018,geirhos_shortcut_2020,geirhos_imagenet-trained_2022,torok_tracking_2022,offert_understanding_2021,goyal_inductive_2022,papa_inductive_2022}, it is unclear what kind of inductive biases are useful for natural languages, especially for more complex tasks such as reasoning.
   Since the training and test losses do not indicate the desired properties, inductive biases offer alternative means to ensure good \gls{ood} performance (\cref{sec:case-study-1}) and transfer (\cref{sec:case-study-3}). 
   For \glspl{llm}, we advocate focusing on inductive biases inducing specific model qualities either in weight or function space (\cref{subsec:disc_indctive_biases}), as these enable us to infer \gls{ood} and transfer performance, irrespective of the loss value.

    \textbf{Lessons for identifiability research.}
    Identifiability theory mostly focuses on how functional, 
    ~\citep{shimizu_linear_2006,hoyer_nonlinear_2008,lachapelle_gradient-based_2020,gresele_independent_2021} and distributional
    ~\citep{hyvarinen_nonlinear_1999,hyvarinen_unsupervised_2016,guo_causal_2022,ke_learning_2020} assumptions can guarantee identifiability, almost exclusively assuming \acrshort{iid} data, with a few pioneering works entering the non-\acrshort{iid} realm by studying compositionality~\citep{brady_provably_2023,wiedemer_compositional_2023,wiedemer_provable_2023,lachapelle_additive_2023} or exchangeability~\citep{guo_causal_2022}. We must go beyond the \acrshort{iid} assumption to understand \glspl{llm} and, we argue, there are useful lessons for identifiability theory to be drawn from studying the saturation regime, \eg, by considering that:
    \begin{enumerate}[nolistsep,label=(\roman*)]
        \item studying the identifiability of task-specific properties, \eg, \gls{icl} (\cf \cref{sec:case-study-2}), as opposed to focusing on identifying all aspects of an underlying distribution;
        \item inductive biases play a role in training non-identifiable models, \ie, some non-identifiable properties might become identifiable once we consider which solutions are reachable by the optimization algorithm;
        \item models in the equivalence class of global minima are reached with different probabilities by training; inductive biases are akin to a prior distribution over models;
        \item non-identifiability may be a feature, not a bug when the \acrshort{mle} objective is misspecified and does not fully describe how the models are later used, but useful inductive biases are present that nudge the model towards a useful solution (\cref{fig:prior}).
    \end{enumerate}

\section{Discussion: where next?}
\label{sec:discussion}

Our main message is highlighting the need to study \glspl{llm} in the saturation regime, where they achieve perfect (statistical) generalization, but this does not guarantee the presence of practical model properties we seek.
These properties are OOD generalization and transferability, and we attribute them to qualitative, non problem-specific inductive biases (\cref{sec:position}).
Here we detail promising research directions and formulate concrete research questions for studying \glspl{llm} in the saturation regime, focusing on generalization measures (\cref{better-gen-measures}), computational language modeling (\cref{subsec:disc_comp_lang_models}), and inductive biases (\cref{subsec:disc_indctive_biases}).

    \subsection{Better generalization measures}
    \label{better-gen-measures}

While statistical generalization offers a robust framework for generalization theory, bounding the test loss only does not explain important properties of \glspl{llm}, as illustrated in \cref{sec:implications}. 
In this section, we give examples of alternative generalization metrics that better align with the properties of natural languages. We advocate for adapting the tools of statistical generalization to develop formalizations for these metrics in the saturation regime. 
Some works have already started extending the classical PAC-Bayes framework to generative models~\citep{PAC_VAE, PAC_GAN}, meta-learning~\citep{PAC-meta, PAC-meta-Vincent}, and non-\acrshort{iid} data~\citep{PAC-time-series, PAC-domain-adapt, PAC-hostile}; however, further efforts are required. Results may also be established for quantities other than generalization error, perhaps bounding \gls{llm}-relevant properties, \eg, \gls{icl}.

    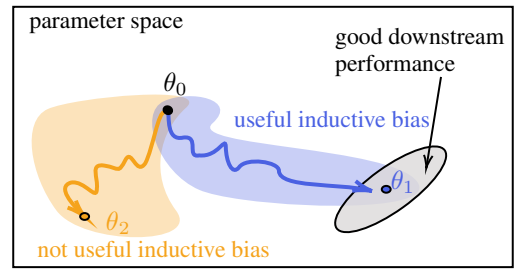
\begin{figure}[t]
        \centering
        \tikzset{every picture/.style={line width=0.75pt}} %

\begin{tikzpicture}[x=0.75pt,y=0.75pt,yscale=-.6,xscale=.67]
\draw  [draw opacity=0][fill={rgb, 255:red, 245; green, 163; blue, 28 }  ,fill opacity=0.3 ] (47.5,100.67) .. controls (67.5,90.67) and (171.5,79.67) .. (151.5,99.67) .. controls (131.5,119.67) and (124,156) .. (144,186) .. controls (164,216) and (74,216) .. (54,186) .. controls (34,156) and (27.5,110.67) .. (47.5,100.67) -- cycle ;
\draw  [draw opacity=0][fill={rgb, 255:red, 74; green, 97; blue, 226 }  ,fill opacity=0.3 ] (135,94) .. controls (155,84) and (191.5,92.67) .. (171.5,112.67) .. controls (151.5,132.67) and (305.5,131.67) .. (325.5,161.67) .. controls (341.29,185.36) and (249.24,185.36) .. (194.54,176.92) .. controls (179.95,174.68) and (168.02,171.83) .. (161.5,168.67) .. controls (130.5,153.67) and (115,104) .. (135,94) -- cycle ;
\draw  [fill={rgb, 255:red, 221; green, 218; blue, 218 }  ,fill opacity=0.8 ] (264.89,200.92) .. controls (259.76,192.56) and (273.82,171.14) .. (296.31,153.06) .. controls (318.8,134.99) and (341.2,127.11) .. (346.33,135.47) .. controls (351.47,143.82) and (337.4,165.25) .. (314.92,183.33) .. controls (292.43,201.4) and (270.03,209.28) .. (264.89,200.92) -- cycle ;
\draw [color={rgb, 255:red, 74; green, 97; blue, 226 }  ,draw opacity=1 ][line width=1.5]    (137.87,101.65) .. controls (139.17,106.69) and (136.43,128.01) .. (149.5,124.67) .. controls (162.5,122.86) and (151.21,141.99) .. (161.5,139.67) .. controls (175.92,141.41) and (181.24,122.6) .. (181.5,132.67) .. controls (181.75,144.86) and (186.25,156.98) .. (191.5,149.67) .. controls (196.89,141.06) and (221.65,145.03) .. (221.5,152.67) .. controls (221.87,167.18) and (265.77,163.05) .. (288.74,168.89) ;
\draw [shift={(291.5,169.67)}, rotate = 197.65] [color={rgb, 255:red, 74; green, 97; blue, 226 }  ,draw opacity=1 ][line width=1.5]    (14.21,-4.28) .. controls (9.04,-1.82) and (4.3,-0.39) .. (0,0) .. controls (4.3,0.39) and (9.04,1.82) .. (14.21,4.28)   ;
\draw [color={rgb, 255:red, 245; green, 163; blue, 28 }  ,draw opacity=1 ][line width=1.5]    (135.51,103.58) .. controls (130.78,101.39) and (126.63,134) .. (120.93,146.23) .. controls (114.09,157.43) and (110.84,133.74) .. (106.11,143.17) .. controls (100.4,155.4) and (101.54,150.3) .. (90.14,148.27) .. controls (78.74,146.23) and (85.58,158.45) .. (84.44,167.62) .. controls (84.44,178.83) and (75.32,160.49) .. (65.05,162.53) .. controls (53.76,166.2) and (66.42,181.42) .. (73.23,188.88) ;
\draw [shift={(75.24,191.06)}, rotate = 226.99] [color={rgb, 255:red, 245; green, 163; blue, 28 }  ,draw opacity=1 ][line width=1.5]    (14.21,-4.28) .. controls (9.04,-1.82) and (4.3,-0.39) .. (0,0) .. controls (4.3,0.39) and (9.04,1.82) .. (14.21,4.28)   ;
\draw  [fill={rgb, 255:red, 245; green, 163; blue, 28 }  ,fill opacity=1 ] (71.89,191.06) .. controls (71.89,189.47) and (73.39,188.19) .. (75.24,188.19) .. controls (77.08,188.19) and (78.58,189.47) .. (78.58,191.06) .. controls (78.58,192.64) and (77.08,193.92) .. (75.24,193.92) .. controls (73.39,193.92) and (71.89,192.64) .. (71.89,191.06) -- cycle ;
\draw  [fill={rgb, 255:red, 0; green, 0; blue, 0 }  ,fill opacity=1 ] (140.04,104.29) .. controls (138.74,105.36) and (136.71,105.05) .. (135.51,103.58) .. controls (134.31,102.12) and (134.4,100.07) .. (135.7,99) .. controls (137.01,97.93) and (139.04,98.25) .. (140.24,99.71) .. controls (141.43,101.17) and (141.35,103.22) .. (140.04,104.29) -- cycle ;
\draw    (343.5,73.67) -- (332.68,148.54) ;
\draw [shift={(332.39,150.52)}, rotate = 278.23] [color={rgb, 255:red, 0; green, 0; blue, 0 }  ][line width=0.75]    (10.93,-3.29) .. controls (6.95,-1.4) and (3.31,-0.3) .. (0,0) .. controls (3.31,0.3) and (6.95,1.4) .. (10.93,3.29)   ;
\draw   (21.5,14.67) -- (397.5,14.67) -- (397.5,233.67) -- (21.5,233.67) -- cycle ;
\draw  [fill={rgb, 255:red, 74; green, 97; blue, 226 }  ,fill opacity=1 ] (298.93,168.19) .. controls (298.93,166.61) and (300.42,165.33) .. (302.27,165.33) .. controls (304.12,165.33) and (305.61,166.61) .. (305.61,168.19) .. controls (305.61,169.77) and (304.12,171.06) .. (302.27,171.06) .. controls (300.42,171.06) and (298.93,169.77) .. (298.93,168.19) -- cycle ;

\draw (132.33,67.7) node [anchor=north west][inner sep=0.75pt]    {$\theta _{0}$};
\draw (303.38,149.79) node [anchor=north west][inner sep=0.75pt]  [color={rgb, 255:red, 76; green, 97; blue, 226 }  ,opacity=1 ]  {$\theta_{1}$};
\draw (88.91,184.36) node [anchor=north west][inner sep=0.75pt]    {$\textcolor[rgb]{0.96,0.64,0.11}{\theta}\textcolor[rgb]{0.96,0.64,0.11}{_{2}}$};
\draw (31.52,18.6) node [anchor=north west][inner sep=0.75pt]  [font=\footnotesize] [align=left] {parameter space};
\draw (261.55,30.32) node [anchor=north west][inner sep=0.75pt]  [font=\footnotesize] [align=left] {good downstream\\performance};
\draw (185.55,99.32) node [anchor=north west][inner sep=0.75pt]  [font=\footnotesize] [align=left] {\textcolor[rgb]{0.29,0.38,0.89}{useful inductive bias}};
\draw (37.55,209.32) node [anchor=north west][inner sep=0.75pt]  [font=\footnotesize] [align=left] {\textcolor[rgb]{0.96,0.64,0.11}{not useful inductive bias}};

\end{tikzpicture}
        \caption{\textbf{Illustration of how inductive biases can affect identifiability:} In the saturation regime, training can result in different parameters $\theta_1,\theta_2$ with the same training and test loss, but different downstream performance. Even if the loss is insensitive to a model property that is required for good downstream performance, choosing a {\color{blue}useful} inductive bias can help capture said property, overcoming its non-identifiability.
        }
        \label{fig:prior}
    \end{figure}

        \textbf{Compositional generalization} 
        describes a model's capability to comprehend the properties of combined features by understanding multiple individual features: \eg, that by understanding words like ``one," ``cat," and ``black",  the model understands phrases like ``one cat" or ``black cat".  Compositional generalization enables models to perform better on \gls{ood} data. Despite compositionality being a core property of natural languages, theoretical works are only started emerging for \glspl{llm}~\citep{ahuja_provable_2024,han_towards_2024,ramesh_compositional_2024,lake_human-like_2023,nogueira2021investigating,dziri2023faith,saparov2023testing}, in contrast to, \eg, computer vision and object-centric representation learning~\citep{locatello_object-centric_2020,brady_provably_2023,wiedemer_compositional_2023,wiedemer_provable_2023,lowe_rotating_2023, li-etal-2019-compositional}.

        \textbf{Systematic generalization} composes rules
        \cite{bahdanau2019systematic,gao_systhematic,ruis2022improving}, and not syntactical compositions as compositional generalization does. Systematic generalization is relevant in reasoning by ensuring that \glspl{llm} can infer ``Flo is the granddaughter of Betty", from the samples ``Nat is the granddaughter of Betty", ``Greg is the brother of Nat", ``Flo is the sister of Greg"~\cite{systematic_transformer}.
        Despite its relevance, systematic generalization research for \glspl{llm} is still limited \cite{systematic_transformer}.

        \textbf{Symbolic generalization} is the model's ability to transfer the responses learned from the training data to another situation which is symbolically related \cite{hoon2020symbolic}. Relevant examples come from mathematics, \eg, knowing that adding ``$2+5x$" and ``$3+4x$" yields ``$(2+3)+(5+4)x$", 
        a model that generalizes symbolically should correctly infer that adding ``$A+Bx$" and ``$C+Dx$" yields ``$(A+C)+(B+D)x$". Here the ability to move from concrete examples to general examples with symbols is symbolic generalization. The active interest in \glspl{llm}'s use for solving complex mathematical problems~\citep{RomeraParedes2023,liu2023improving,azerbayev2023llemma,imani2023mathprompter} makes symbolic generalization a relevant concept to study.
\vspace{0.5em}

\textbf{Research questions.}

            \begin{researchquestion}
                \citet{von_oswald_transformers_2023} showed that for linear attention only Transformers, in-context learning is implemented by the model in the form of implicit gradient descent updates on the in-context examples in the prompt. This meta learning view of ICL could be leveraged to provide bounds on in-context learning accuracy via meta learning generalization bounds~\citep{PAC-meta}. Could we obtain non-vacuous bounds beyond these simple cases?
         \end{researchquestion}

            \begin{researchquestion}
                 How to formalize the out-of-distribution nature of NLP tasks? For example, what is a useful way to describe the part of the pre-training distribution’s support with vanishingly small probabilities (e.g., ICL prompts)
                Is it possible to connect the algorithmic (e.g., Kolmogorov) complexity to qualitative model properties and better (compositional) generalization?
            \end{researchquestion}

            \begin{researchquestion}
                How can we relate symbolic generalization or reasoning to the probability distributions? How do we accomplish this, for example, for symbolic ODEs?
                Context: \citet{becker_predicting_2023} proposed a Transformer-based method for learning a symbolic representation of ODEs. Since the data-generating process of such ODEs yields a stochastically generated tree, learning a probabilistic model of symbolic ODEs connects symbolic generalization and probabilistic modeling. Indeed, as the authors show in their Fig. 3., the Transformer learns a symbolic ODE that can extrapolate.
            \end{researchquestion}

            \begin{researchquestion}
                Can the findings from the literature that demonstrate the linearity of \glspl{llm} in certain cases~\citep{hernandez_linearity_2024,merullo_language_2024}  be used to explain OOD extrapolation in LLMs?
            \end{researchquestion}

\subsection{Computational language modeling to study transferability}\label{subsec:disc_comp_lang_models}
    In \cref{better-gen-measures}, we advocated for more appropriate generalization measures that reflect the properties of natural languages. For studying phenomena such as compositionality, we can rely on the advancements in computational linguistics and theoretical computer science. We believe that using well-defined formal languages as our testbed and developing computational models for \glspl{llm} is a promising direction to understand (near) \gls{ood} behavior in language models.
    
\vspace{0.5em}
    \textbf{Formal languages.} Natural languages' expressive power hinges on their (approximate) discrete compositionality~\citep{languageinstinct}, \ie, combining a discrete set of symbols according to the language rules. 
        Since formal languages such as \glspl{pcfg} can be constructed to be discrete compositional, they are an excellent testbed to study compositionality in \glspl{llm} (\cf our experiments in \cref{sec:case-study-1}). Namely, we can define the union or intersection of multiple grammars and study extrapolation and transfer performance. Indeed, many researchers rely on \glspl{pcfg} for understanding aspects of natural languages~\citep{liu_same_2023,favre_contextual_2020,merrill_formal_2023,ackerman_survey_2020}. 

\vspace{0.5em}
    \textbf{A computational perspective on language modeling.} Computational models of (formal) languages can provide new insights for understanding \glspl{llm}. Regular languages can be modelled as \glspl{fsm}, and \glspl{rnn} have the same computational model~\citep{cleeremans_finite_1989}; whereas \glspl{pcfg} need an additional stack~\citep{companion2021chomsky}. Recently, \citet{weiss_thinking_2021} showed that a new programming language, \acrshort{rasp}, describes a computational model for Transformers,
        which can explain how reordering fully connected and attention layers changes performance~\citep{press_improving_2020}. In terms of \acrshort{rasp}, these reorderings constrain information flow, acting as an architectural inductive bias.
        As we detail in \cref{subsec:disc_indctive_biases},  computational models of \glspl{llm} and algorithmic information theory can help characterize the inductive biases in \glspl{llm}.

\vspace{0.5em}
        \textbf{Research questions.}

            \begin{researchquestion}
                Is it possible with computational models (such as the RASP language) to define the minimal set of requirements for models (Transformers, state-space models such as Mamba) to perform well on transfer tasks? Are all components in these architectures necessary? Is there a component beyond the architecture (such as the optimization algorithm or weight initialization) without which good transfer is not guaranteed?
            \end{researchquestion}

            \begin{researchquestion}
                Are the models with good fine-tuning performance related in weight space or function space, e.g., is there a (linear) mode connectivity result that describes the geometry of LLMs that transfer well to other tasks?
            \end{researchquestion}

\subsection{Inductive biases for understanding \glspl{llm}} \label{subsec:disc_indctive_biases}

In contrast to relying solely on inductive biases enabling statistical generalization, we advocate for studying inductive biases that are not problem- or loss-specific. These qualitative characteristics remain insightful even in the saturation regime, as they enable us to reason about performance on new tasks and alternative generalization measures (\cref{better-gen-measures}). We encourage investigations that intertwine statistical generalization with these \gls{llm}-relevant inductive biases, \eg, by characterizing extrapolation performance in terms of statistical generalization ability \textit{and} the presence of an inductive bias.
To motivate the need for \gls{llm}-relevant inductive biases, we showcase (sometimes toy) examples of qualitative properties relevant to specific DNN models and tasks. We then outline some promising directions for \glspl{llm}.

\vspace{0.5em}
\textbf{Examples of qualitative model properties.}
\textit{Sparsity} is a prevalent concept in machine learning often enforced through explicit regularisation \citep[see][for a review]{Viadurre2013}. Intriguingly, inductive biases alone can give rise to sparsity in gradient descent: $L-$layer linear diagonal networks trained on binary classification
converge to the $\ell_{\frac2L}$ large margin classifier, yielding a sparse solution
~\citep{gunasekar2019implicit}, whereas deep matrix factorization is known to lead to low-rank solutions \citep{gunasekar2017matrix_fact, arora2019matrix_fact}. For models where the \gls{ntk} assumptions hold, gradient descent solves kernel ridge ($\ell_1$-regularized) regression \citep{jacot_neural_2020}. For DNNs implementing Boolean functions, the resulting parameter to function map is \textit{simple}\footnote{The study of this object is motivated by the observation that SGD approximates Bayesian inference sufficiently well, where the prior $p(f)$ is taken as the probability of a randomly initialized neural network implementing a specific function} in terms of Lempel-Ziv complexity \citep{valle-perez2018deep, dingle_2018} and converges towards low-entropy functions \citep{mingard2020low_entropy}. Binary classifiers of bitstrings are biased towards low sensitivity to changes in the input \citet{depalma2019random}. \citet{rahaman2019spectral} highlights a bias towards low-frequency functions.
There is also work that looks at the dynamics of qualitative properties during training: neural networks appear to learn increasingly complex functions, starting with linear functions \citep{arpit2017krueger, simplicitybiasSGD} making use of higher-order statistics only in later stages \citep{refinetti2023neural}. Although many of these findings rely on simplified mathematical models they nevertheless provide good insights into qualitative properties of trained neural networks, which can be connected to properties of interest such as \gls{ood} extrapolation.%

\vspace{0.5em}
\textbf{Insights from algorithmic information theory.}
The Kolmogorov complexity $K(x)$ of a bitstring $x$ is defined as the length of the shortest program under a fixed programming language that produces $x$ \citep{KOLMOGOROV1998387}.  For \glspl{llm}, an intriguing direction is connecting model properties to the Kolmogorov complexity of its generated text: a bias towards low Kolmogorov complexity might imply improved (compositional) generalization. 
 Though Kolmogorov complexity is uncomputable, insights from algorithmic information theory remain pertinent for understanding LLMs and building general-purpose models \citep{SCHMIDHUBER1997857, hutter2000theory}. 
 \citet{no_free_lunch_agw} argues that real-world data has low complexity in the Kolmogorov sense. This simplicity bias in data is shared with (even randomly initialized) neural networks and is more general than what the architecture would suggest:  CNNs can effectively learn tabular data despite their lack of spatial structure.
 Via the connection between prediction and compression \citep{vitanyi}, we may interpret a \acrshort{dnn} as a compressor of the training data, where the best possible compressor has the lowest Kolmogorov complexity.  \citet{deepmind_llm_compression} shows that successful LLMs are good general-purpose compressors, \eg, Chinchilla 70B compresses ImageNet patches to $43.4\%$ despite having been trained primarily on text. In addition, \citet{deepmind2024universal_predictors} develop a meta-learning method to train \glspl{llm} to approximate Solomonoff Induction \citep{SOLOMONOFF19641}, which also provides interesting connections to algoritmic information theory.
To study the effects of data complexity on \glspl{llm}, we advocate for the use of simple formal languages, such as \glspl{pcfg}~\citep{liu_same_2023,favre_contextual_2020,merrill_formal_2023,ackerman_survey_2020}, as they have a controllable notion of complexity and structure.

\vspace{0.5em}
\textbf{Insights from the Transformer architecture}
The underlying Transformer architecture may shape the inductive biases in LLMs. \acrshort{rasp}~\citep{weiss_thinking_2021} as a computational model for Transformers (\cref{subsec:disc_comp_lang_models}) offers a framework for characterizing the algorithms LLMs can implement. Specifically, the efficiency and compactness with which these algorithms can be expressed in RASP might serve as a novel, \gls{llm}-specific complexity metric. 
    A related direction for finding inductive biases based on the Transformer architecture is mechanistic interpretability~\citep{olah2023mechanistic}, which aims to understand the internal mechanisms of a model.

\vspace{0.5em}
    \textbf{Research questions.}

    \begin{researchquestion}
        How do we go beyond the current study of inductive bias? We believe that fruitful potential directions include studying the computational models and information flow of SOTA models using an established toolbox from other fields such as the theory of finite-state automata, as well as modeling Transformers in the RASP language.
    \end{researchquestion}

    \begin{researchquestion}
        What would happen by iteratively training GPT models to imitate each other? What are the “eigenfeatures” these models amplify, i.e., what are they sensitive to (akin to the periodic features CNNs learn)?
    \end{researchquestion}

    \begin{researchquestion}
        Does model architecture and data contain an implicit bias towards ICL? 
        What would happen if we trained a Transformer adversarially (such that it is not an in-context learner), how many (compared to the size of the original training set) adversarial samples do we need to suppress ICL? If we then train a second Transformer to imitate the first one, does ICL emerge in the second? If we add negligible adversarial perturbation to the data, can ICL still emerge?
    \end{researchquestion}

   \section{Conclusion}
        In this paper, we articulated the position that
        deep learning theory needs yet another shift to understand emergent \gls{llm} behaviours. 
        LLMs often operate in the saturation regime---\ie, they reach (near-)optimal training and test loss---, making statistical generalization insufficient to explain desired model properties because of its ``black-box'' nature and \acrshort{iid} assumption. Therefore, we should focus on inductive biases that can induce qualitative properties relevant for natural language tasks.
        We support our position by demonstrating multiple ways in which AR LMMs are non-identifiable, but due to inductive biases, they can exhibit \gls{ood} rule extrapolation (\cref{sec:case-study-1}), \acrlong{icl} (\cref{sec:case-study-2}), and data-efficient fine-tunability (\cref{sec:case-study-3}).
        Our case studies formulate well-defined scenarios that can serve as a basis to study \glspl{llm} theoretically.  We encourage the community to (1) study more expressive generalization measures; (2) use formal languages and computational models to study \gls{llm} behaviour in well-controlled settings; and (3) analyze and propose helpful inductive biases that ensure good \gls{ood} performance and transferability.

\section*{Acknowledgements}
 The authors would like to thank the anonymous reviewers for their comments improving the manuscript and inspiring some of the research questions, Siyuan Guo for fruitful comments on OOV generalization, Bence Nyéki for insights on practical aspects of natural language processing, Felix Leeb for discussions on LLM reasoning, Vince Velkey for constructive comments on the manuscript, and Gail Weiss for her insights on PCFGs. This work was supported by a Turing AI World-Leading Researcher Fellowship G111021. Patrik Reizinger acknowledges his membership in the European Laboratory for Learning and Intelligent Systems (ELLIS) PhD program and thanks the International Max Planck Research School for Intelligent Systems (IMPRS-IS) for its support. This work was supported by the German Federal Ministry of Education and Research (BMBF): Tübingen AI Center, FKZ: 01IS18039A. Wieland Brendel acknowledges financial support via an Emmy Noether Grant funded by the German Research Foundation (DFG) under grant no. BR 6382/1-1 and via the Open Philantropy Foundation funded by the Good Ventures Foundation. Wieland Brendel is a member of the Machine Learning Cluster of Excellence, EXC number 2064/1 – Project number 390727645. This research utilized compute resources at the Tübingen Machine Learning Cloud, DFG FKZ INST 37/1057-1 FUGG. Anna Kerekes would like to thank the Max Planck ETH Center for Learning Systems (CLS).

\section*{Impact Statement}
    A deeper understanding of contributing factors to the success LLMs has potentially positive impacts: it may lead to increased data-, cost- and energy-efficiency, and thus broader access to the benefits, and it may lead to better guarantees and predictions of model behaviour that increase safety and interpretability. Our work advocates for a more application-driven theory of deep learning (whereby researchers take into account ways in which these models are used) but falls short of making recommendations that these application-driven theories should incorporate sociotechnical aspects or account for models’ effect on humans.

\bibliography{references,imported_references}
\bibliographystyle{icml2024}

\newpage
\appendix
\onecolumn
\section{Details on $\varepsilon-$identifiability and the saturation regime}
\label{appx:bicikli}
Mathematical formalizations of in-context learning often assume perfect (statistical) generalization \citep{in_context_bayesian, wang2023large}, that is, $\kl{p(x_{1:k})}{q(x_{1:k})}=0$. In this context, the saturation regime is understood as the regime of perfect generalization. However, in experimental demonstrations, the term ``saturation regime’’ is used more leniently to mean \textit{near} perfect test loss. We argue that a refinement of the concept is required in order to align theory with practice. As we demonstrate in \cref{sec:case-study-2}, by relaxing perfect generalization only to $\kl{p(x_{1:k})}{q(x_{1:k})}\leq\varepsilon$, we may observe qualitatively different behaviours in models, for example, the existence and non-existence of in-context learning ability. Hence for the theory, it does matter whether we are truly or only approximately in the saturation regime. Yet in practice, in-context learning properties hold even for smaller transformers, where the test loss is only near-optimal at best (Figure 6 in \citet{liu_same_2023}). We argue that this discrepancy between theory and practice may be explained by inductive biases: in the near-optimal loss regime, where multiple models of varying quality exist, inductive biases select a solution that satisfies additional important properties, such as in-context learning.

\section{Problem framework and notations}
Our framework and notations are based on those in \citep{in_context_bayesian}, but we provide some additional details.
\label{appx:framework}
\makenomenclature

\nomenclature[01]{\(\theta \in \Theta\)}{space of latent concepts, the concept $\theta$ determines the transition probability matrix of the \gls{hmm} hidden states}
\nomenclature[02]{\(p_0\)}{a prior distribution on $\Theta$}
\nomenclature[03]{\(o \in \mathcal{O}\)}{a single token and an output state of the HMM. And $\mathcal{O}$ is the set of all tokens}
\nomenclature[04]{\(h \in \mathcal{H}\)}{a hidden state of the HMM}
\nomenclature[05]{\( o^{\text{delim}} \in \mathcal{O} \)}{a unique delimiter token and an output state of the HMM} 

\nomenclature[06]{\(h^{\text{delim}} \in \mathcal{D} \subset \mathcal{H} \)}{a hidden delimiter state of the HMM}
\nomenclature[07]{\(T\)}{the (fixed) length of each pre-training document $(o_1, o_2, ..., o_T)$ }

\nomenclature[08]{\((x_i, y_i) \in \mathcal{O} ^{k} \)}{an in-context learning example} 
\nomenclature[09]{\(k\)}{the length of each in-context learning example, i.e. $x_i \in \mathcal{O} ^{k-1}$, $y_i \in  \mathcal{O}  $}
\nomenclature[10]{\(S_n\)}{the concatenation of $n$ in-context learning examples separated by delimiter tokens, $S_n=\{(x_i, y_i, o^{\text{delim}}) \vert 1 \leq i \leq n \}$}
\nomenclature[11]{\( x_{\text{test}}\)}{a test example for in-context learning, with length $k-1$}
\nomenclature[12]{\( (S_n, x_{\text{test}}) \in  \mathcal{P}_N\)}{the full in-context learning prompt with $n$ examples and a test example}
\nomenclature[13]{\(n\)}{the number of in-context learning examples in $S_n$}
\nomenclature[14]{\(N\)}{the (fixed) length of the in-context learning prompts, $N=(k+1)(n+1)-1$}
\nomenclature[15]{\(\mathcal{P}_N\ \subset \mathcal{O}^N \)}{the set of sequences of length $N$ having prompt structure (Def \ref{prompt_str_def})}
\nomenclature[16]{\(y\)}{the in-context learning output}
\nomenclature[17]{\(p_{\text{prompt}}\)}{the distribution used to generate in-context prompts $(S_n, x_{\text{test}})$, defined on $\mathcal{O}^N \times  \mathcal{H} \times \mathcal{D}^n$ (see below for a full definition). Note that $p_{\text{prompt}}$ different from $p$ due to the distribution shift induced by the concatenation with the delimiter tokens.}
\nomenclature[18]{\(p(\cdot \vert \theta) \)}{the distribution of token sequences (of some fixed maximal length) given the latent concept, defined by the HMM}
\nomenclature[19]{\(q_n\)}{a distribution on token sequences, depends on $n$, the number of examples in the in-context learning prompt}

\printnomenclature

Let $p_0(\theta)$ be a prior distribution on the latent concepts $\theta \in \Theta$. For each $\theta$, let the distribution $p(o_1, \dots, o_T | \theta)$ be a \gls{hmm}, where the concept $\theta$ determines
the transition probability matrix of the \gls{hmm}. From  this \gls{hmm}, we generate token sequences $o_1, \dots, o_T
 $ of a fixed pre-training document length $T$. First, a concept $\theta$ is sampled from $p_0(\theta)$, then a document is generated according to $p(o_1, \dots, o_T | \theta)$.  
 Therefore, the pre-training distribution (on token sequences) is a mixture of \glspl{hmm} induced by these conditionals.
    \begin{equation}
        p (o_1, \dots, o_T) = \int_{\theta \in \Theta} p(o_1, \dots, o_T | \theta) p_0(\theta) \, d \theta.
    \end{equation}
    Now we detail how the in-context prompts are generated. This iterative process defines $p_{\text{prompt}}$. The in-context prompt consists of example input-output pairs $(x_i, y_i)$ and a test input $x_{\text {test }}$, and the goal is to predict the test output $y_{\text{test}}$ by predicting the next token. All examples in the \gls{icl} prompt are connected via the same underlying concept $\theta^*$---\eg, in the input-output pairs ("Gandhi was", "Indian"), ("Jefferson was", "American"), the underlying concept is the nationality of a person.
    The $i$-th example pair is independently generated as follows:
    \begin{enumerate}[nolistsep]
        \item Generate a start hidden state $h_{i}^{\text {start }}$ from a prompt start distribution $p_{\text {prompt}}(h)$. \textit{This defines $p_{\text {prompt}}$ on $\mathcal{H}$ as an arbitrary discrete distribution.}
        \item Given $h_{i}^{\text {start }},$ generate the example sequence $(x_{i}, y_{i})$ from $p\left(o_1,o_2, \dots, o_k  \mid h_{i}^{\text {start }}, \theta^{*}\right)$, the pretraining distribution conditioned on a prompt concept $\theta^{*}$  (given by a HMM). \textit{This defines $p_{\text {prompt}}(o_1,o_2, \dots,  o_k  \mid h_{i}^{\text {start}}):=p(o_1,o_2, ... o_k  \mid h_{i}^{\text {start}}, \theta^*).$}
    \item The delimiter state at the end of each example (except the test example) is sampled from  $p_{\text {prompt}}(h^{\text{delim}})$. \textit{This defines $p_{\text{prompt}}$ on $\mathcal{D}$ as an arbitrary discrete distribution.}   \end{enumerate}
    The test input $x_{\text {test}}=x_{n+1}$ is sampled in the same way, i.e. $x_{\text {test}} \sim p\left(o_1,o_2, ... o_{k-1} \mid h_{n+1}^{\text {start }}, \theta^{*}\right)$.  The prompt consists of a sequence of in-context examples $S_{n}$ followed by the test example $x_{\text {test }}$, with a unique delimiter token $o^{\text {delim }}$ between each element. 
    \begin{flalign}
        ( S_n, x_{\text{test}}) &= (x_1, y_1, o^{\text{delim}}, x_2, y_2, o^{\text{delim}},
        \dots, x_n, y_n, o^{\text{delim}}, x_{\text{test}}) 
    \end{flalign}
The above determines the probability of $( S_n, x_{\text{test}})$ under $p_{\text{prompt}}$ as
\begin{equation}
\begin{split}
    &p_{\text{prompt}}(S_n, x_{\text{test}})=\int_{\mathcal{H}^{n+1}}  \left(\prod_{j=1}^n p_{\text{prompt}}(h_j^{\text{start}})p(o_{1:k}^j \mid h_j^{\text{start}}, \theta^*) p_{\text{prompt}}(h_j^{\text{delim}})  \right) \cdot \\
    &\left(p_{\text{prompt}}(h_{n+1}^{\text{start}})p(o_{1:k-1}^j \mid h_{n+1}^{\text{start}}, \theta^*) \right) \mathrm{d} h_1^{\text{start}}\mathrm{d} h_2^{\text{start}} \dots \mathrm{d} h_{n+1}^{\text{start}}.
\end{split}
\end{equation}
Note that it is not necessary to integrate with respect to the $h_j^{\text{delim}}$ due to Assumption \ref{assumption1} below.
After sampling $(S_n, x_{\text{test}}) \sim p_{\text{prompt}},$ we treat them as fixed values without loss of generality.

For in-context learning, the ground-truth output $y_{\text{test}}$ for the example $x_{\text{test}}$ is defined by $y_{\text{test}}=\argmax_y p_{\text{prompt}} (y|x_{\text{test}})$, where $p_{\text{prompt}} (y|x_{\text{test}})$ can be calculated as
\begin{equation}
\begin{split}
    &p_{\text{prompt}}(y|x_{\text{test}}) = \int_{\mathcal{H}} p_{\text{prompt}}(y, h_{\text{test}}^{\text{start}}|x_{\text{test}}) \,d h_{\text{test}}^{\text{start}} = \int_{\mathcal{H}} p_{\text{prompt}}( h_{\text{test}}^{\text{start}}|x_{\text{test}}) p_{\text{prompt}}(y | h_{\text{test}}^{\text{start}},x_{\text{test}}) \,d h_{\text{test}}^{\text{start}} =\\
    &=\int_{\mathcal{H}} p_{\text{prompt}}( h_{\text{test}}^{\text{start}}|x_{\text{test}}) p(y | h_{\text{test}}^{\text{start}},x_{\text{test}}, \theta^*) \,d h_{\text{test}}^{\text{start}} = \mathbb{E}_{h^{\text{start}}_{\text{test}} \sim p_{\text{prompt}}(h^{\text{start}}_{\text{test}}|x_{\text{test}})} [p(y|h^{\text{start}}_{\text{test}}, x_{\text{test}}, \theta^* )],
\end{split}
\end{equation}
where $h^{\text{start}}_{\text{test}}$ is the hidden state corresponding to the first token of $x_{\text{test}}$.\\
Then we focus on the in-context learning predictor $\argmax_y p(y|S_n, x_{\text{test}})$, which predicts the output with the highest probability over the pre-training distribution given the prompt. We say that the model is an in-context learner
    if, as $n\! \rightarrow\! \infty,$ 
    \begin{equation}
        \argmax_y p(y|S_n, x_{\text{test}}) \rightarrow y_{\text{test}}.
    \end{equation}

\begin{assumption}[Delimiter hidden states]
\label{assumption1}
Let the delimiter hidden states $\mathcal{D}$ be a subset of $\mathcal{H}$. For any $h^{\text {delim }} \in \mathcal{D}$ and $\theta \in \Theta, p\left(o^{\text {delim }} \mid h^{\text {delim }}, \theta\right)=1$ and for any $h \notin \mathcal{D}, p\left(o^{\text {delim }} \mid h, \theta\right)=0$.
\end{assumption}

\begin{assumption}[Bound on delimiter transitions]
\label{assumption2}
For any delimiter state $h^{\text {delim }} \in \mathcal{D}$ and any hidden state $h \in \mathcal{H}$, the probability of transitioning to a delimiter hidden state under $\theta$ is upper bounded $p\left(h^{\text {delim }} \mid h, \theta\right)<$ $c_{2}<1$ for any $\theta \in \Theta \backslash\left\{\theta^{*}\right\}$, and is lower bounded $p\left(h^{\text {delim }} \mid h, \theta^{*}\right)>c_{1}>0$ for $\theta^{*}$. Additionally, the start hidden state distribution for delimiter hidden states is bounded as $p\left(h^{\text {delim }} \mid \theta\right) \in\left[c_{3}, c_{4}\right]$.
\end{assumption}

The above two assumptions allow us to simplify our analysis and avoid degenerate cases such as a deterministic (hidden) Markov chain. The result of \citep{in_context_bayesian} (which we rely on) used 3 additional assumptions (Assumption 3,4,5), but those are omitted here.

\section{Proof of \cref{our_theorem}}
\label{appx:proof}
\ourtheorem*

We denote the threshold example sequence length as $n_0$, after which $p$ satisfies in-context learning, i.e., $$\forall n>n_0 : \argmax_y p(y|S_n, x_{\text{test}}) = \argmax_y p_{\text{prompt}} (y|x_{\text{test}}).$$ 

\begin{proof} 

Our proof follows the below steps.

\begin{itemize}[nolistsep]
            \item \textbf{Step 1}:  for every $n\geq n_0$, we define a $q_n$ by equating it with $p$ everywhere except on sequences that have a prompt structure (to be defined). We construct $q_n$ such that the prompt completion will be different than in $p$, i.e. $$\argmax_y q_n(y|S_n, x_{\text{test}}) \neq \argmax_y p(y|S_n, x_{\text{test}}).$$ We do this by making sure that $$\argmax_{y \neq y^*} q_n(y|S_n, x_{\text{test}}) \geq q_n(y^*|S_n, x_{\text{test}})+ \frac{\delta}{2}.$$
            \item \textbf{Step 2}: we bound $\text{KL} \left( p||q_n\right)$ as $$\text{KL} \left( p||q_n\right) \leq \text{[constant]} \times \text{[the probability of prompts]}.$$
            \item \textbf{Step 3}: we show that the latter converges to $0$ as $n \to \infty$ and is controlled by a function of $\delta$.
     \end{itemize}

\paragraph{Step 1}
Let us denote the length $N$ prompt by $O=(o_1, ..., o_N)$.
Consider the fixed distribution $p(o_1, \dots, o_N)$ defined by a mixture of HMMs. For any fixed $n \in \mathbb{Z}^{+}$, we define a distribution $q_n(o_1, \dots, o_N)$ as a modification of $p$. To define $q_n$, we make use of the following definition.

\begin{definition}(Prompt structure)
\label{prompt_str_def}
    We say that a sequence of tokens $(o_1, ..., o_N)$ has prompt structure if it can be written in form $(S_n, x_{\text{test}}, y)=(x_1, y_1, o^{\text{delim}}, x_2, y_2, o^{\text{delim}},\dots, x_n, y_n, o^{\text{delim}}, x_{\text{test}}, y)$, with each $x_i$ having length $k-1$ and $y_i$ having length 1.
\end{definition}
Let us denote the set of prompt structures in $\mathcal{O}^N$ as $\mathcal{P}_N.$ We consider those sequences that have a prompt structure. We construct $q_n$ such that it is different only on these sequences and equal to $p(o_1, \dots, o_N)$ everywhere else.

We can expand $q_n$ on prompt structures via the chain rule
\begin{flalign}
    q_n(S_n, x_{\text{test}},y) &= \sum_{j=1}^{n+1} q_n(y_j | S_{j-1},x_j) q_n(x_j|S_{j-1}) q_n(o^{\text{delim}}|S_{j-2}, x_{j-1}, y_{j-1})  \label{chain_rule}\\
    &=q_n(y | S_{n},x_{\text{test}}) q_n(x_{\text{test}}|S_{n}) q_n(o^{\text{delim}}|S_{n-1}, x_{n}, y_{n})\nonumber\\
    &+ \sum_{j=1}^{n} q_n(y_j | S_{j-1},x_j) q_n(x_j|S_{j-1}) q_n(o^{\text{delim}}|S_{j-2}, x_{j-1}, y_{j-1}),
\end{flalign}
with notation $x_{n+1}=x_{\text{test}}$, $y_{n+1}=y$ and $x_0=y_0=S_0=S_{-1}=\emptyset$ in order to define the endpoints appropriately.
For $j=1, \dots, n$ let $$q_n(x_j|S_{j-1}) := p(x_j|S_{j-1}) $$ and $$q_n(o^{\text{delim}}|S_{j-2}, x_{j-1}, y_{j-1}) := p(o^{\text{delim}}|S_{j-2}, x_{j-1}, y_{j-1}),$$ 
i.e., the same as $p$. We are only modifying $p(y | S_{n},x_{\text{test}})$, and only at its largest and second largest values---as we show, that is sufficient to change the output of the $\argmax$, thus, changing whether the model has \gls{icl}. Denote
\begin{align*}
    a_1 &= \max_y p(y | S_{n}, x_{\text{test}})\\
    y_1^* &= \argmax_y p(y | S_{n}, x_{\text{test}})\\
    a_2 &= \max_{y \neq y_1^*} p(y | S_{n}, x_{\text{test}})\\
    y_2^* &= \argmax_{y \neq y_1^*} p(y | S_{n}, x_{\text{test}}).
\end{align*}
Then, for all $y \neq y_1^*$ and $y\neq y_2^*$ let $q_n$ be the same as $p,$ \ie, 
\begin{align*}
    q_n(y| S_{n}, x_{\text{test}}) := p(y | S_{n}, x_{\text{test}}) \quad \forall y_1^* \neq y \neq y_2^*.
\end{align*}

However, on the largest and second largest values we change $p$ as the following
\begin{align*}
    q_n(y^*_1| S_{n}, x_{\text{test}}) := \frac{a_1+a_2}{2}- \frac{\delta}{2}\\
    q_n(y^*_2| S_{n}, x_{\text{test}}) := \frac{a_1+a_2}{2}+ \frac{\delta}{2},
\end{align*}
where $\delta$ is arbitrarily small. Due to \eqref{chain_rule}, $q_n$ is well-defined. Since $q_n(y| S_{n}, x_{\text{test}})$ has its maximum at $y_2^*$,
$$ \argmax_y p(y| S_{n}, x_{\text{test}})) \neq \argmax_y q_n(y| S_{n}, x_{\text{test}}),$$
that is, the model $q_n$ has a different output when the $\argmax$ operator is applied.
 Since $n>n_0$,
from \citet{in_context_bayesian} we know that $p$ has \gls{icl}: 
$$
\argmax_y p(y|S_n, x_{\text{test}}) = \argmax_y p_{\text{prompt}} (y|x_{\text{test}}) .$$
However, since $p$ and $q_n$ do not match, the following holds and $q_n$ \textbf{cannot be an in-context learner}:
$$
\argmax_y q_n(y|S_n, x_{\text{test}}) \neq \argmax_y p_{\text{prompt}} (y|x_{\text{test}}) $$

To bound $\text{KL} \left( p||q_n\right)$ in Step 2, we need the following inequality

\begin{equation}
\label{bound}
    \log \left(  \frac{p(y|S_n, x_{\text{test}})}{q_n(y|S_n, x_{\text{test}})} \right) \leq \log \left( \frac{2}{1- \delta}\right). 
\end{equation}
It holds since if $q_n(y|S_n, x_{\text{test}}) = p(y|S_n, x_{\text{test}})$, then the log equals to 0, otherwise by \eqref{bound}:
$$
\log \left(  \frac{p(y_1^*|S_n, x_{\text{test}})}{q_n(y_1^*|S_n, x_{\text{test}})} \right)  = \log \left( \frac{2a_1}{a_1+a_2 - \delta} \right) \leq \log \left( \frac{2}{1- \delta}\right) \quad \text{and}
$$
$$
\log \left(  \frac{p(y_2^*|S_n, x_{\text{test}})}{q_n(y_2^*|S_n, x_{\text{test}})} \right) = \log \left( \frac{2a_2}{a_1+a_2 + \delta} \right) \leq \log \left( \frac{2}{1- \delta}\right),
$$
since $\frac{2a_2}{a_1+a_2+\delta} < \frac{2a_1}{a_1+a_2-\delta} \leq \frac{2}{1-\delta}$ with equality if $a_2=0$. 
\paragraph{Step 2} Now we bound the KL divergence between $p(o_1, \dots, o_N)$ and $q_n(o_1, \dots, o_N)$.
\begin{equation*}
\begin{split}
    &\text{KL} ( p(o_1, \dots, o_N) || q_n(o_1, \dots, o_N)) = \sum_{t \in \mathcal{O}^N} p(t) \log \left( \frac{p(t)}{q_n(t)}) \right) = \sum_{t \in \mathcal{P}_N}p(t) \log \left( \frac{p(t)}{q_n(t)} \right) +\\
    &+\sum_{ t  \notin \mathcal{P}_N} p(t) \log \left( \frac{p(t)}{q_n(t)} \right) = \sum_{(S_n, x_{\text{test}}, y)} p(S_n, x_{\text{test}}, y) \log \left( \frac{p(S_n, x_{\text{test}}, y)}{q_n(S_n, x_{\text{test}}, y)} \right) =\\
\end{split}
\end{equation*}
where the last equation is due to the fact that $q_n$ is defined to equal $p$ on non-prompt structures ($\mathcal{O}^N \setminus \mathcal{P}_N$). Expanding $p(S_n, x_{\text{test}}, y)$ and $q_n(S_n, x_{\text{test}}, y)$ via the chain rule, we get 
\begin{equation*}
\begin{split}
    &= \sum_{(S_n, x_{\text{test}}, y)} p(S_n, x_{\text{test}}, y) \log \left( \prod_{j=1}^{n+1} \frac{p(y_j | S_{j-1},x_j) p(x_j|S_{j-1}) p(o^{\text{delim}}|S_{j-2}, x_{j-1}, y_{j-1})}{q_n(y_j | S_{j-1},x_j) q_n(x_j|S_{j-1}) q_n(o^{\text{delim}}|S_{j-2}, x_{j-1}, y_{j-1})} \right) =\\
    &= \sum_{(S_n, x_{\text{test}}, y)} p(S_n, x_{\text{test}}, y)  \log \left(  \frac{p(y|S_n, x_{\text{test}}) }{q_n(y|S_n, x_{\text{test}})} \right),\\  
\end{split}
\end{equation*}
since by the definition of $q_n$, all the terms inside the $\log$ vanish excluding $p(y|S_n, x_{\text{test}})$ and $q_n(y|S_n, x_{\text{test}})$. Now we use the bound in Eq \eqref{bound}.
\begin{equation*}
\begin{split}
    & \text{KL} ( p(o_1, \dots, o_N) || q_n(o_1, \dots, o_N)) \leq  \log \left( \frac{2}{1- \delta}\right) \sum_{(S_n, x_{\text{test}}, y)} p(S_n, x_{\text{test}}, y).
\end{split}
\end{equation*}

\paragraph{Step 3} We now show that $\sum_{(S_n, x_{\text{test}}, y)} p(S_n, x_{\text{test}}, y) \to 0$ as $n \to \infty$ exponentially fast.
\begin{flalign}
    \sum_{(S_n, x_{\text{test}}, y)} p(S_n, x_{\text{test}}, y) &= \sum_{(S_n, x_{\text{test}}, y)} \int_{\theta \in \Theta} p(S_n, x_{\text{test}}, y | \theta) p_0(\theta)\\
    &= \int_{\theta \in \Theta} \sum_{(S_n, x_{\text{test}}, y)}  p(S_n, x_{\text{test}}, y | \theta) p_0(\theta).
\end{flalign}
Let us fix $\theta \in \Theta$. %
Consider the HMM with output distribution $p$, conditioned on $\theta$. We wish to focus on the property of prompt structures that delimiter output states occur at every $(k+1)^{\text{th}}$ token. Bounding the probability of delimiter output states occurring at every $(k+1)^{\text{th}}$ token allows us to bound the probability of prompt structures.

From Assumption \ref{assumption1} of \citet{in_context_bayesian}, we know that a delimiter hidden state generates the delimiter output state with probability 1, and non-delimiter hidden states do not generate the delimiter output state. Hence it is equivalent to bound the probability of the hidden Markov chain being at delimiter hidden states exactly at every $(k+1)^{\text{th}}$ step.

Let us consider those latent concepts $\theta$, where it is possible to reach a delimiter hidden state.
Without loss of generality, assume that our HMM hidden state Markov chain is at a delimiter hidden state (a state in $\mathcal{D}$), otherwise, we reach $\mathcal{D}$ in some steps. For two, possibly equal $h^{\text{delim}, i}, h^{\text{delim}, j} \in \mathcal{D}$, define
\begin{equation}
   p_{ij}^{k, \theta}=\sum_{h_1, ..., h_{k} \in \mathcal{H}} p(h^{\text{delim}, i}, h_{k}, h_{k-1}, ..., h_{1}, h^{\text{delim}, j} | \theta)
\end{equation}
as the probability of the hidden Markov chain going from state $h^{\text{delim}, j}$ to $h^{\text{delim}, i}$ in $k+1$ steps (it may pass through delimiter hidden states in the meantime). For those latent concepts $\theta$, where it is not possible to reach a delimiter hidden state, it is possible to define $p_{ij}^{k, \theta}$ accordingly, but its value is zero. As this does not affect our upper bound, we may neglect these cases. 
Let $p^*=\sup_{\theta \in \Theta \setminus \{ \theta^*\}}\max_{i, j}p_{ij}^{k, \theta}$, where the maximum is under all pairs of delimiter hidden states in $\mathcal{D}$. We show that $p^*<1$. For all $\theta \in \Theta \setminus \theta^*$, by Assumption \ref{assumption2} of \citet{in_context_bayesian}, for all $h^{\text{delim}} \in \mathcal{D}$ and $h \in \mathcal{H}$, $p(h^{\text{delim}} | h, \theta) < c_2 <1$. Hence for all $i, j$, 

\begin{flalign}
    p_{ij}^{k, \theta} &= \sum_{h_1, ..., h_{k} \in \mathcal{H}} p(h^{\text{delim}, i}, h_{k}, h_{k-1}, ..., h_{1}, h^{\text{delim}, j} | \theta)\\
    &= \sum_{h_1, ..., h_{k} \in \mathcal{H}} p(h^{\text{delim}, i}| h_{k}, \theta) p( h_{k}| h_{k-1}, ..., h_{1}, h^{\text{delim}, j},  \theta)p(  h_{k-1}, ..., h_{1}, h^{\text{delim}, j} |  \theta)\\
    &< c_2 \sum_{h_1, ..., h_{k} \in \mathcal{H}} p( h_{k}| h_{k-1}, ..., h_{1}, h^{\text{delim}, j},  \theta)p(  h_{k-1}, ..., h_{1}, h^{\text{delim}, j} |  \theta)\\
    &\leq c_2.
\end{flalign}

Thus, $p^* \leq c_2$, and the probability of generating a prompt structure, which has $n$ instances of this pattern of a delimiter hidden state appearing after $k$ non-delimiter hidden states, is upper bounded by $c_2^n$. 

Hence 
\begin{equation}
\begin{split}
    &\int_{\theta \in \Theta} \sum_{(S_n, x_{\text{test}}, y)}  p(S_n, x_{\text{test}}, y | \theta) p_0(\theta) \leq
     \int_{\theta \in \Theta} (\max_{i, j}p_{ij}^{k, \theta})^n p_0(\theta) =\\ 
     &=\int_{ \theta \in \{ \theta^* \} } (\max_{i, j}p_{ij}^{k, \theta})^n p_0(\theta) + \int_{\theta \in \Theta \setminus \{ \theta^* \} } (\max_{i, j}p_{ij}^{k, \theta})^n p_0(\theta)
      \leq (\sup_{\theta \in \Theta \setminus \{ \theta^* \}}\max_{i, j}p_{ij}^{k, \theta})^n = (p^*)^n \leq c_2^n,
\end{split}
\end{equation}

where the second inequality is because the integral over $\theta \in \{ \theta^* \}$ is zero, since $\{ \theta^* \}$ has 0 Lebesgue measure. Therefore $\sum_{(S_n, x_{\text{test}}, y)} p(S_n, x_{\text{test}}, y)$ decays exponentially.
Hence 
$$
\text{KL} ( p(o_1, \dots, o_N) || q_n(o_1, \dots, o_N))\leq \log \left( \frac{2}{1-\delta}\right) c_2^n $$
From this, we obtain that defining $n_1 = \log_{c_2} \left( \frac{\epsilon}{\log 2} \right) + 1$ and $\delta = \min \left\{a_1 + a_2, 1-2 e^{-\frac{\epsilon}{c_2^n}} \right\}$ ensures
$\text{KL} ( p(o_1, \dots, o_N) || q_n(o_1, \dots, o_N))\leq \epsilon$ for all $n \geq n_1$. 
\end{proof}

\section{Identifiability}\label{sec:app_ident}

    \begin{definition}[Set of probability measures]
        We denote the set of probability measures on domain \domain as $\pdfset(\mathcal{X})$.
    \end{definition}

    \begin{definition}[Property]
        Let $\pdfset(\mathcal{X})$ be the set of distibutions on \domain and let $p\in\pdfset(\mathcal{X})$. A property $\property$ is a binary function $ \pdfset(\mathcal{X})\to\braces{0;1}$.  We say that $p$ has property $\property$, if $\property(p)=1$ and that it does not if $\property(p)=0$ 
    \end{definition}

    \begin{definition}[Property equivalence classes]\label{def:eq_class}
        A property \property partitions  a set of distributions $\pdfset(\mathcal{X})$ into two equivalence classes, $\pdfset_{\property}$ and $\pdfset_{\overline{\property}}$ such that
        \begin{align}
            \forall i\neq j, p_i, p_j \in \pdfset_{\property}: \property(p_i)=\property(p_j)= 1\\
            \forall i\neq j, p_i, p_j \in \pdfset_{\overline{\property}}: \property(p_i)=\property(p_j)= 0
        \end{align}
        such that $\pdfset(\mathcal{X}) = \pdfset_{\property}\cup \pdfset_{\overline{\property}}$ and $\pdfset_{\property}\cap \pdfset_{\overline{\property}}=\emptyset.$
    \end{definition}

    \begin{definition}[$\varepsilon-$non-identifiability of distributional properties]\label{definition:eps_non_ident}
        Let $\pdfset(\domain)$ be a set of distributions with an equivalence class structure, given by property \property and denoted as $\pdfset_{\property}, \pdfset_{\overline{\property}}$. We say that property $\property$ of a distribution  is $\varepsilon-$non-identifiable if there exists a  distribution $p\in \pdfset_{\property}$ such that $\exists q \in \pdfset_{\overline{\property}}$ such that $\kl{p}{q} \leq \varepsilon.$
    \end{definition}
\section{Experimental details}\label{sec:app_exp}

\paragraph{Reproducibility and codebase.} We use PyTorch~\citep{paszke2019pytorch}, PyTorch Lightning~\citep{Falcon_PyTorch_Lightning_2019}, and HuggingFace Transformers~\citep{Wolf_Transformers_State-of-the-Art_Natural_2020}. Our code and experimental logs are publicly available at \texttt{\url{https://github.com/rpatrik96/llm-non-identifiability}}.

\paragraph{\gls{pcfg}.}
    We generate data from the $a^nb^n$ \glspl{pcfg} up to length 256. Besides the tokens $a\ (0)$ and $b\ (1)$, we use \acrshort{sos} (2), \acrshort{eos} (3), and padding (4) tokens. We define our test prompts as all possible sequences of length 8 (prepended with \acrshort{sos}), which we split into in-distribution, and \gls{ood} test prompts, based on whether they can be completed in the form of $a^nb^n$. The training set includes all unique sequences up to length $256$.

\begin{table}[H]
    \caption{\gls{pcfg} parameters}
    \label{tab:pcfg}
    \vskip 0.15in
    \begin{center}
    \begin{small}
    \begin{sc}
    \begin{tabular}{lr} \toprule
        Parameter & Values \\ \midrule
        Number of tokens & 5 (\acrshort{sos}, \acrshort{eos}, PAD, 0,1)\\
        Maximum sequence length & 256\\
        Training data maximum length & 256\\
        Test prompt length & 8\\
        Batch size & $128$\\
    \end{tabular}
    \end{sc}
    \end{small}
    \end{center}
    \vskip -0.1in
\end{table}

\paragraph{Model.} We use a Transformer decoder~\citep{vaswani_attention_2017_edit} in flavor of the decoder-only GPT models~\citep{radford2018improving,radford_language_2018,openai2023gpt4}. We apply standard positional encoding, layer normalization,  ReLU activations, the AdamW optimizer~\citep{loshchilov_decoupled_2019} with inverse square root learning rate schedule~\citep{xiong_layer_2020}.  For prompt prediction, the model can predict up to length $300.$ We train for $50,000$ epochs with the standard \gls{ce} loss for the next token prediction task. For the adversarial and oracle training versions, we add an additional loss term which we detail below.

\begin{table}[H]
    \caption{Transformer parameters}
    \label{tab:transformer}
    \vskip 0.15in
    \begin{center}
    \begin{small}
    \begin{sc}
    \begin{tabular}{lr} \toprule
        Parameter & Value (normal)  \\ \midrule
        Model & Transformer decoder\\
        Number of layers & $5$\\
        Dropout probability & $0.1$\\
        Model dimension & $10$\\
        Feedforward dimension & $1024$\\
        Number of attention heads & $5$\\
        Layer norm $\epsilon$ & \expnum{6}{-3}\\
        Activation &  ReLU\\
        Optimizer & AdamW\\
        Learning rate scheduler & inverse square root\\
        Batch size & $128$\\
        Learning rate & \expnum{2}{-3}\\
        Prompt prediction cutoff length & $300$\\
        Number of epochs & $50,000$\\
    \end{tabular}
    \end{sc}
    \end{small}
    \end{center}
    \vskip -0.1in
\end{table}

\paragraph{Metrics.}
    We monitor training and validation loss, and the adherence to the grammar's two rules \ref{rule1},\ref{rule2}. We measure the accuracy of each separately and simultaneously (\ie, to check whether the generated sequence is grammatical). For a deeper understanding, we calculate these metrics for different scenarios:
    \begin{enumerate}[nolistsep]
        \item For the in- and out-of-distribution test prompts and
        \item For a batch of \acrshort{sos} tokens.
    \end{enumerate}
    For each of the above, we re-calculate the accuracies for the subset of prompt completions which have an \acrshort{eos} token to avoid false conclusions (\eg, if the model wants to finish $aaa$ as a longer sequence than the cutoff length, the unfinished sequence would lower the accuracy). Since for the \gls{ood} prompts, it is by definition impossible to fulfil \ref{rule2} (that $a's$ are before $b's$), we separately calculate this rule on the completion: \eg, if the \gls{ood} $\textbf{abbb}$ is completed as $\textbf{abbb}aa$, then it is considered correct for this metric, but $\textbf{abbb}abaa$ is not, as it has an $a$ after a $b$ in the \textit{ completion}.
    We also monitor the accuracy of next token prediction via greedy decoding (\ie, using the token with the largest probability). We report additional numerical values in \cref{tab:extrapolation}, supplementing \cref{figure:rule_extrapolation}.

\paragraph{Adversarial training.}
    For adversarial training, we generate \gls{ood} sequences such that the number of $a's$ and $b's$ is not equal, there is one more from one symbol. Then, we treat the first 8 $a$ and $b$ tokens (\ie, the same as the test prompt length) as the \textit{prompt}, and the rest as the \textit{completion}. During training, we add a \acrshort{ce} loss on the \gls{ood} prompt completions. The rationale of only optimizing on the \gls{ood} completions is to keep the prompts \gls{ood}, since our claim in \cref{sec:case-study-1} is about different behavior for \gls{ood} prompts.

\paragraph{Oracle training: enforcing rule extrapolation.}
    This scenario is very similar to adversarial training, with the difference, that we generate additional \gls{ood} training samples, where the \textit{prompt} is still \gls{ood}, but here the \textit{completion} is generated such that the number of $a's$ and $b's$ is equal over the whole sequence. Then we add a \acrshort{ce} loss on the \gls{ood} prompt completions.

\begin{table}[H]
    \caption{Comparison of the extrapolation performance of \acrshort{mle}, adversarial, and oracle training for \gls{ood} prompts. For (approximately) the same validation loss, the extrapolation of \ref{rule1} for \gls{ood} prompts differs enormously, showing that the loss alone cannot distinguish the extrapolation property}
    \label{tab:extrapolation}
    \vskip 0.15in
    \begin{center}
    \begin{small}
    \begin{sc}
    \begin{tabular}{lrrr} \toprule
        \multirow{2}{*}{Name} &\multirow{2}{*}{Validation loss} & \multicolumn{2}{c}{Accuracy of \ref{rule1}}\\
        & & Mean+std.& Range\\ \midrule
        \acrshort{mle} & $0.0215\scriptscriptstyle\pm0.0011$& $0.437{\scriptscriptstyle\pm0.047}$ &\brackets{0.339;0.629}\\
        Adversarial & $0.0223\scriptscriptstyle\pm0.00094$& $0.$ & \brackets{0.;0.}\\
        Oracle &$0.0199\scriptscriptstyle\pm0.00025$ &$0.83{\scriptscriptstyle\pm0.122}$ &  \brackets{0.634; 1.} \\
        \bottomrule
    \end{tabular}
    \end{sc}
    \end{small}
    \end{center}
    \vskip -0.1in
\end{table}

\newpage
\printacronyms

\end{document}